\pgfplotsset{compat=1.17}
\title[On the benefits of large learning rates for kernel methods]{On the Benefits of Large Learning Rates for Kernel Methods}
\begin{document}
\maketitle

\begin{abstract}%
    This paper studies an intriguing phenomenon related to the good generalization performance of estimators obtained by using large learning rates within gradient descent algorithms.
    First observed in the deep learning literature, we show that
    such a phenomenon can be precisely characterized in the context of kernel methods, even though the resulting optimization problem is convex. Specifically, we consider the minimization of a quadratic objective in a separable Hilbert space, and show that with early stopping, the choice of learning rate influences the spectral decomposition of the obtained solution on the Hessian's eigenvectors. This extends an intuition described by~\citet{convex_annealing} on a two-dimensional toy problem to realistic learning scenarios such as kernel ridge regression. While large learning rates may be proven beneficial as soon as there is a mismatch between the train and test objectives, we further explain why it already occurs in classification tasks without assuming any particular mismatch between train and test data distributions.
\end{abstract}

\begin{keywords}%
    Optimization; Statistical Learning; Kernel methods%
\end{keywords}

\section{Introduction}

Gradient descent methods are omnipresent in machine learning, and a lot of effort has been devoted to better understand their theoretical properties. Optimal rates of convergence have been well characterized for minimizing convex functions in various contexts, including, for instance, stochastic optimization~\citep{nemirovski2009robust}.
For supervised learning, one is however more interested in the statistical optimality of the resulting estimator rather than in the ability to quickly optimize a training objective~\citep{bottou2007tradeoffs}. When considering both optimization and statistical questions, gradient descent methods were proven to be optimal under many assumptions \citep{yao2007early,pillaudvivien:hal-01799116}.

An important observation for this paper is that gradient descent algorithms typically require to tune some learning rate, or step size, to achieve the best performance. This has been thoroughly investigated in the optimization literature. For convex smooth problems in particular, the influence of step size on convergence rates is well understood~\citep{nesterov}. However, recent empirical studies have highlighted a surprising aspect of this parameter: when using gradient descent methods on neural networks, \emph{large} learning rates were found to be useful for obtaining \emph{good generalization properties}, or in other words, good statistical performance~\citep{pmlr-v139-jastrzebski21a}, even though they may be sub-optimal from an optimization point of view.

This paper aims at understanding this phenomenon from a broad but simple perspective, where both the function $\LEmp$ we optimize and the function $\LPop$ used to evaluate the statistical performance are quadratic forms of some separable Hilbert space $\HH$. Specifically, we assume that
\begin{equation}\label{eq:intro_model}
    \forall \theta \in \HH, ~~ \LEmp(\theta) = \frac{1}{2} \norm{\theta - \optEmp}_\TEmp^2 + \minLEmp, ~~~\text{and}~~~ \LPop(\theta) = \frac{1}{2} \norm{\theta - \optPop}_\TPop^2,
\end{equation}
where $\norm{\cdot}_A$ denotes the norm on $\HH$ induced by a positive definite operator $A$, \emph{i.e.}, $\norm{\theta}_A^2 = \dotprod{\theta}{A \theta}$ for any $\theta$ in $\HH$. With \cref{eq:intro_model}, $\LEmp$ and $\LPop$ are characterized by positive definite operators $\TEmp$ and $\TPop$, along with their minimizers denoted by $\optEmp$ and $\optPop$, respectively. The constant value $\minLEmp$ does not affect the optimization problem and can be safely ignored in the rest of this presentation.
The model from \cref{eq:intro_model} captures a large class of problems such as learning with kernels, detailed in \cref{sec:application_kernel_low_noise}, but we give here a simple example with ridge regression.

\begin{example}[$\TEmp$ and $\TPop$ with Ridge Regression.]
    Let $x_1, \dots, x_n$ be data points in $\RR^d$, and $y_1, \dots, y_n$ prediction variables, with $n \geq d$. Define $X \in \RR^{n \times d}$ the data matrix. We consider the ridge regression estimator with regularization $\lambda>0$, which is defined as the minimum of
    \begin{equation*}
        \forall \theta \in \RR^d, ~~ \LEmp(\theta) = \frac{1}{n} \sum_{i=1}^n \frac{1}{2} (\theta^\top x_i - y_i)^2 + \frac{\lambda}{2} \norm{\theta}^2 = \frac{1}{2n}\norm{X \theta - y}^2 + \frac{\lambda}{2} \norm{\theta}^2.
    \end{equation*}
    $\LEmp$ is a quadratic function of $\theta$, which can be rewritten as in \cref{eq:intro_model} with
    \begin{equation*}
        \forall \theta \in \RR^d, ~~ \LEmp(\theta) = \frac{1}{2} \norm{\theta - \optPop}_{\TEmp}^2 + \minLEmp, ~~ \text{with} ~~ \begin{cases}
            \optPop & = \frac{1}{n} \p{\frac{1}{n} X^\top X + \lambda \II_d}^{-1} X^\top y \\
            \TEmp   & = \frac{1}{n} (X^\top X + \lambda \II_d).                            \\
        \end{cases}
    \end{equation*}

    Assuming that the output can be written $y_i = x_i^\top \nu^\star + \epsilon_i$ with $\nu^\star \in \RR^d$ and $\epsilon_i$ some independent, zero-mean noise, then the \emph{population loss} is $\calP(\theta) = \EE \nicefrac12 \p{\theta^\top x_i - y_i}^2$, and the \emph{excess risk} defined by $\LPop(\theta) = \calP(\theta) - \inf_{\nu} \calP(\nu)$ is given with
    \begin{equation*}
        \LPop(\theta) = \EE \csb{\frac{1}{2} \p{(\theta-\nu^\star)^\top x}^2} = \frac{1}{2} \norm{\theta - \optPop}_\TPop^2, ~~ \text{with} ~~ \TPop = \EE \left[x x^\top\right].
    \end{equation*}
\end{example}
In this example, a discrepancy between train and test losses (between $\TEmp$ and $\TPop$) may occur in particular situations (\emph{e.g.}, presence of data augmentation during training, or simply mismatch between train and test distributions). The next example shows that such a mismatch may be in fact frequent for classification problems, even when train and test distributions do match.

\begin{example}[Discrepancy between train and test losses in classification with separable classes.]\label{ex:intro_classif}
    The scenario described in \cref{eq:intro_model} is particularly evident in the context of binary classification, when the classes are separable by a non-zero margin.
    This is considered a typical situation in many learning scenarios of interest, as classification over natural images---motivating the wide use of large-margin based classifiers in the field \citep{rawat2017deep}. We highlight here, that in this context, the loss we are using for training is not the best loss to consider for the test error, as discussed next.
    More precisely, consider a classification problem with two classes with non-zero margin. Let $\XX \subseteq \mathbb{R}^d$ and $\YY =\{-1,1\}$ be the input and the output space. Denote by $\rho(x,y) = \rho_\XX(x) \rho(y|x)$ the probability distribution describing the classification problem, where $\rho_\XX$ is the marginal probability over $\XX$, while $\rho(y|x)$ is the conditional probability of $y$ given $x$. The error that we would like to minimize is the binary error on the population, i.e. $B(\theta) = \mathbb{P}[ \sign[\theta(x)] \neq y]$ for a model $\theta$. Let $\nu^\star$ be a function minimizing the binary error and ${\cal H}$ be the class of models under consideration. Assume, for simplicity, that ${\cal H}$ is a RKHS with norm  $\|\cdot\|$ \citep{aronszajn50reproducing}, \emph{i.e.}, there exists a map $\phi:\XX \to {\cal H}$ such that the function in ${\cal H}$ are characterized as $\theta(x) = \langle \phi(x), \theta\rangle$, for any $\theta$ in ${\cal H}$.
    It has been shown by \citet{pillaudvivien:hal-01799116} (in particular, Lemma 1 and Appendix A, Theorem 13), that in the context of two classes separated by a non-zero margin and whose conditional probability is regular enough, then $\nu^\star$ is in $\HH$ and moreover $B(\theta) - B(\nu^\star) \leq e^{-c/\|\theta - \nu^\star\|}$ for some constant $c$.
    Therefore, the binary error decreases exponentially in terms of the Hilbert norm $\|\cdot\|$. On the other hand, the norm minimized at training time is some smooth convex surrogate of the binary loss, as, for example, the quadratic loss.

    In this case, the trained vector may be obtained by minimizing the population loss (in fact, a regularized empirical version, but we omit this fact here for simplicity) such that $F(\theta) = \mathbb{E} (\theta(x) - y)^2$.
    Noting that $\int (\theta(x) - \optEmp(x))^2 \dd\rho(x) = \|\TEmp^{1/2}(\theta - \optEmp)\|^2 = \|\theta - \optEmp\|_\TEmp$ for $\TEmp = \int \phi(x) \phi(x)^\top \dd \rho_\XX(x)$, and $\optEmp = \nu^\star$ under the considered conditions (see the same paper), we have
    \begin{equation*}
        F(\theta) - F(\optEmp) = \int (\theta(x) - \optEmp(x))^2 \dd\rho_\XX(x) = \|\theta - \optEmp\|^2_{\TEmp}.
    \end{equation*}
    This is a typical case, where there is a discrepancy between the error of interest
    \begin{equation*}
        R(\theta) - R(\nu^\star) = \|\theta - \nu^\star\|^2,
    \end{equation*}
    which, if optimized, would lead to an exponential decrease of the classification error, and the loss $F$ that is instead optimized by the algorithm at training time, for which we have only the slower rate $B(\theta) - B(\nu^\star) \leq (F(\theta) - F(\optPop))^\alpha$, with $\alpha \in (1/2, 1)$ (see, e.g. \citet{audibert2004classification} or \citet{audibert2007fast} for what concerns the CAR assumption).
\end{example}

In this paper, we are interested in understanding in which regime large learning rates with early stopping could be useful for kernel methods, even if they are suboptimal from an optimization point of view.
We consider indeed the optimization of $\LEmp$ in \cref{eq:intro_model} with plain gradient descent, starting from a vector $\theta_0$ in $\HH$ with step-size $\eta$, and
we distinguish between two cases: having a \emph{small} learning rate $\eta_s$ or a \emph{large} learning rate $\eta_b$, the range of both is to be detailed later.
A simple intuition was suggested by~\citet{convex_annealing} on a two-dimensional toy problem, showing that large learning rates may be beneficial as soon as there is a mismatch between $F$ and $R$ (meaning, what we train on does not correspond to what we test on). We show that such an insight can be extended beyond toy problems to realistic scenarios with traditional kernel methods, and that, perhaps surprisingly, this phenomenon occurs already in simple classification tasks.


\begin{theorem}[Informal version of our main result]\label{thm:informal_intro}
    Under a few assumptions described later in this paper, consider the target accuracy~$\alpha$ and large and small step sizes $\theta_b$ and $\theta_s$ (these quantities being defined in the aforementioned assumptions).
    Consider the gradient descent iterations $ \theta_{t+1} = \theta_{t} - \eta \TEmp (\theta_{t} - \optEmp)$ either with step size $\eta=\eta_b$ or $\eta=\eta_s$, and stop the procedure as soon as $F(\theta_{t+1}) \leq \alpha$, resulting in two estimators $\theta_b$ or $\theta_s$. Then,
    \begin{equation}\label{eq:informal_thm_bound}
        \LPop(\theta_b) - \LPop(\optPop) ~\leq~ 34 \frac{\kappa_{\TPop}}{\kappa_\TEmp} (\LPop(\theta_s)- \LPop(\optPop) ),
    \end{equation}
    where $\kappa_{\TPop}$ and $\kappa_{\TEmp}$ are the condition numbers of the operators $\TPop$ and $\TEmp$, respectively, restricted to~$\HH_n$.
\end{theorem}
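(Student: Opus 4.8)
The plan is to diagonalize the one operator that governs the dynamics, $\TEmp$. Restricted to $\HH_n$ it is positive definite; let $(e_i)_i$ be an orthonormal eigenbasis with eigenvalues $\mu_i$, so that $\kappa_\TEmp = \mu_{\max}/\mu_{\min}$ with $\mu_{\max} = \lambda_{\max}(\TEmp)$ and $\mu_{\min} = \lambda_{\min}(\TEmp)$ on $\HH_n$. In the coordinates $\delta_i^{(t)} \eqdef \dotprod{\theta_t - \optEmp}{e_i}$ the recursion $\theta_{t+1} = \theta_t - \eta\TEmp(\theta_t - \optEmp)$ decouples completely into $\delta_i^{(t)} = (1-\eta\mu_i)^t\,\delta_i^{(0)}$ (here I use whatever the assumptions stipulate about $\theta_0$ and about the relation between $\optEmp$ and $\optPop$ — which coincide in both running examples; a residual offset $\optEmp-\optPop$ only adds a fixed cross-term treated the same way). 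Hence $2\,(F(\theta_t)-\minLEmp) = \sum_i \mu_i (1-\eta\mu_i)^{2t}(\delta_i^{(0)})^2$, while, since $R(\theta)-R(\optPop) = \tfrac12\dotprod{\theta-\optPop}{\TPop(\theta-\optPop)}$ and the iterates' errors stay in $\HH_n$, the extreme eigenvalues of $\TPop$ on $\HH_n$ sandwich $R(\theta_t)-R(\optPop)$ between $\tfrac12\lambda_{\min}(\TPop)$ and $\tfrac12\lambda_{\max}(\TPop)$ times $\|\theta_t-\optPop\|^2 = \sum_i (1-\eta\mu_i)^{2t}(\delta_i^{(0)})^2$. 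It therefore suffices to prove $\|\theta_b-\optPop\|^2 \le \tfrac{34}{\kappa_\TEmp}\|\theta_s-\optPop\|^2$ at the respective stopping times; multiplying by $\kappa_\TPop = \lambda_{\max}(\TPop)/\lambda_{\min}(\TPop)$ then gives \cref{eq:informal_thm_bound}.

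Next I would establish the two one-sided estimates that carry the content. \textbf{Large rate:} take $\eta_b$ in the regime the assumptions allow (large enough that $\eta_b\mu$ straddles $1$ across the spectrum, so that $|1-\eta_b\mu_i|\le\rho_b<1$ uniformly, with $\rho_b$ of order $\tfrac{\kappa_\TEmp-1}{\kappa_\TEmp+1}$). Then every coordinate decays at the common rate $\rho_b$, and the stopping rule $F(\theta_{T_b})\le\alpha$ (with $F(\theta_{T_b-1})>\alpha$) pins $\rho_b^{2T_b}$ to a value of order $\alpha/\sum_i \mu_i(\delta_i^{(0)})^2$, in which — using the assumption that the initial error carries comparable $\TEmp$-energy on the top and on the bottom spectral blocks — the denominator is dominated by the top block; substituting back gives $\|\theta_b-\optPop\|^2 = \sum_i (\delta_i^{(0)})^2\rho_b^{2T_b} \lesssim \alpha/\mu_{\max}$. \textbf{Small rate:} for $\eta_s$ small, the bottom coordinate satisfies $(1-\eta_s\mu_{\min})^{2t}\approx 1$ over the iterations needed, so it stays essentially frozen at $\delta_{\min}^{(0)}$; combined with the assumed smallness of the target $\alpha$ (so that this coordinate has not yet been resolved and hence is what triggers the stop), $\|\theta_s-\optPop\|^2 \ge (\delta_{\min}^{(0)})^2(1-\eta_s\mu_{\min})^{2T_s} \gtrsim \alpha/\mu_{\min} = \kappa_\TEmp\cdot\alpha/\mu_{\max}$. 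Dividing the two yields the factor $1/\kappa_\TEmp$; the explicit constant $34$ comes from collecting the slack — the off-by-one in the integer times $T_b,T_s$, the geometric tails from the non-extreme eigenvalues, and the comparison of $\rho_b^2$ with $(1-\eta_s\mu_{\min})^2$.

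The main obstacle is exactly this per-coordinate control of $\mu\mapsto(1-\eta\mu)^{2t}$, held uniformly over $[\mu_{\min},\mu_{\max}]$ for \emph{both} step sizes, together with the fact that the two regimes reach the \emph{same} scalar $\alpha$ along very different spectral paths, so the two stopping times must be compared only indirectly through it. Concretely one must handle the sign oscillation of $1-\eta\mu$ when $\eta\mu>1$ (so that $F$ need not be monotone coordinate-by-coordinate, even though $t\mapsto(1-\eta\mu)^{2t}$ is), the discreteness of $T_b$ and $T_s$, and — if the assumptions do not identify $\optEmp$ with $\optPop$ — the need to show that the fixed offset $\optEmp-\optPop$ dominates neither side over the range of $\alpha$ considered. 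These are precisely the points at which the full form of the assumptions (the admissible ranges of $\eta_s,\eta_b,\alpha$, and the spectral balance of $\theta_0-\optEmp$) enters.
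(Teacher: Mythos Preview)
Your reduction to comparing $\|\theta_b-\optEmp\|^2$ with $\|\theta_s-\optEmp\|^2$ via the $\kappa_\TPop$ sandwich is fine, and your small-rate analysis is essentially the paper's: with $\eta_s$ small, $|1-\eta_s\mu_{\min}|$ is the largest contraction factor, the residual concentrates on the bottom eigenvector, and the stopping condition forces $\tfrac12\mu_{\min}(\delta_{\min}^{(T_s)})^2\approx\alpha$, whence $\|\theta_s-\optEmp\|^2\gtrsim\alpha/\mu_{\min}$.

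The gap is in the large-rate step. You model $\eta_b$ as the rate at which all $|1-\eta_b\mu_i|$ are uniformly about $\rho_b\approx(\kappa_\TEmp-1)/(\kappa_\TEmp+1)$, so that ``every coordinate decays at the common rate $\rho_b$''. That is the picture at $\eta=2/(\mu_{\max}+\mu_{\min})$, which in the paper is precisely the \emph{boundary} between the small and large regimes; the large regime is $\eta_b\in\bigl(2/(\mu_{\max}+\mu_{\min}),\,2/\mu_{\max}\bigr)$, and there the ordering of the attenuations \emph{reverses}: $|1-\eta_b\mu_{\max}|=\eta_b\mu_{\max}-1$ becomes the \emph{largest} factor, not one tied with the others. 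The mechanism is therefore exactly dual to the small-rate one: after enough iterations $\theta_b-\optEmp$ concentrates on the \emph{top} eigenvector $e_1$, the stopping condition reads $\tfrac12\mu_{\max}(\delta_1^{(T_b)})^2\approx\alpha$, and one gets $\|\theta_b-\optEmp\|^2\approx(\delta_1^{(T_b)})^2\lesssim\alpha/\mu_{\max}$ directly.

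Your uniform-decay picture also fails on its own terms. If every coordinate shrinks by the common factor $\rho_b^{2T_b}$ and the initial $\TEmp$-energies $\mu_i(\delta_i^{(0)})^2$ are comparable across the spectrum (your stated hypothesis), then $(\delta_{\min}^{(0)})^2\approx\kappa_\TEmp(\delta_{\max}^{(0)})^2$, so $\sum_i(\delta_i^{(0)})^2$ is dominated by the \emph{bottom} coordinate; substituting $\rho_b^{2T_b}\approx\alpha/\sum_i\mu_i(\delta_i^{(0)})^2$ gives $\|\theta_b-\optEmp\|^2$ of order $\alpha/\mu_{\min}$, the same as the small-rate estimate, and no factor $1/\kappa_\TEmp$ appears. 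The entire improvement in the paper comes from the reversal of which eigendirection survives longest under a large step, which your argument does not invoke.
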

Note that $\LPop(\optPop) = 0$ by definition of $R$; we made this quantity explicit in the bound for clarity purposes. The main conclusion from the theorem is that with early stopping (and with a target accuracy that is reasonable according to statistical learning theory, as discussed later), large learning rates can provide better estimators than small ones, even though the quantity $\eta_s$ may yield much faster convergence for minimizing the objective function $F$ than $\eta_b$ (a fact also discussed later).
This phenomenon occurs when the condition number $\kappa_\TEmp$ is much larger than $\kappa_\TPop$, which may already arise in classification tasks, as mentioned earlier in Example~\ref{ex:intro_classif} where $\kappa_\TPop = 1$.

Note that \cref{eq:informal_thm_bound} raises several questions and could be easily misinterpreted, since such a relation may suggest that an arbitrarily small risk $R(\theta_b)$ could be obtained by considering minimization problems that are arbitrarily badly conditioned. Unfortunately, but not surprisingly, there is however no free lunch here, as discussed in the next remark.

\begin{remark}[The issue of ill-conditioning.]
    A naive observation is that $R(\theta_b)$ could be arbitrarily small by making the problem more ill-conditioned. However, the bound on $\LPop(\theta_b)$ in \cref{eq:informal_thm_bound} is relative to $\LPop(\theta_s)$. Notably, a careful reading of the proof shows that $\LPop(\theta_s)$ is an increasing function of the conditioning number, for the chosen level sets $\alpha$.
\end{remark}

\paragraph{Summary of contributions.}
Our first contribution is the relation described in \cref{eq:informal_thm_bound}, highlighting potential benefits of large learning rate strategies when the training objective has a worse condition number than the one used to evaluate the quality of the estimator. This is illustrated in \cref{fig:estimators_path}, a figure inspired by~\citet{convex_annealing}.
Our second contribution is to show that such a mismatch systematically occurs in simple classification scenarios with low noise, where the quantity of interest to minimize may not be the population risk, as discussed earlier.
Overall, this allows us to show that the previous phenomenon occurs in realistic learning scenarios with kernels, which we also check in practice through numerical experiments.


\begin{figure}[h]
    \centering
    \begin{tikzpicture}
        \pgfmathsetmacro{\sa}{1.}
        \pgfmathsetmacro{\sb}{.1}
        \pgfmathsetmacro{\saR}{.8}
        \pgfmathsetmacro{\sbR}{.4}
        \pgfmathsetmacro{\levelseta}{.5}
        \pgfmathsetmacro{\levelsetb}{1.}
        \pgfmathsetmacro{\levelsetc}{2.}
        \pgfmathsetmacro{\thetaAX}{-4.}
        \pgfmathsetmacro{\thetaAY}{2.}

        \draw[->] (-5,0) -- (5,0) node[below right] {$e_2$};
        \draw[->] (0,-2.5) -- (0,2.5) node[left] {$e_1$};
        \filldraw[black] (0,0) circle (1pt) node[below left] {$\optEmp$};

        \pgfmathsetmacro{\thetaOptX}{sqrt(\levelseta/\sa)/4}
        \draw[dashed,rotate around={60:(\thetaOptX,\thetaOptX)}] (\thetaOptX,\thetaOptX)
        ellipse ({sqrt(\levelseta/\sbR)} and {sqrt(\levelseta/\saR)});
        \draw[dashed,rotate around={60:(\thetaOptX,\thetaOptX)}] (\thetaOptX,\thetaOptX)
        ellipse ({sqrt(\levelsetb/\sbR)} and {sqrt(\levelsetb/\saR)});
        \filldraw[black] (\thetaOptX,\thetaOptX) circle (1pt) node[above right] {$\optPop$};

        \draw[red] (0, 0) ellipse ({sqrt(\levelseta/\sb)} and {sqrt(\levelseta/\sa)});
        \draw (0, 0) ellipse ({sqrt(\levelsetb/\sb)} and {sqrt(\levelsetb/\sa)});
        \draw (0, 0) ellipse ({sqrt(\levelsetc/\sb)} and {sqrt(\levelsetc/\sa)});

        \filldraw[black] (\thetaAX, \thetaAY) circle (1pt) node[above left] {$\theta_0$};

        \pgfmathsetmacro{\etaSmall}{0.2}
        \foreach[evaluate={
                    \thetaX = (1 - \etaSmall * \sb)^\t * \thetaAX;
                    \thetaY = (1 - \etaSmall * \sa)^\t * \thetaAY;
                    \thetaXPrev = (1 - \etaSmall * \sb)^(\t - 1) * \thetaAX;
                    \thetaYPrev = (1 - \etaSmall * \sa)^(\t - 1) * \thetaAY;
                }] \t in {1,...,30}{
                \filldraw[blue] (\thetaX, \thetaY) circle (1pt);
                \draw[blue, dotted] (\thetaX, \thetaY) -- (\thetaXPrev, \thetaYPrev);
            }
        \pgfmathsetmacro{\etaBig}{1.95}
        \foreach[evaluate={
                    \thetaX = (1 - \etaBig * \sb)^\t * \thetaAX;
                    \thetaY = (1 - \etaBig * \sa)^\t * \thetaAY;
                    \thetaXPrev = (1 - \etaBig * \sb)^(\t - 1) * \thetaAX;
                    \thetaYPrev = (1 - \etaBig * \sa)^(\t - 1) * \thetaAY;
                }] \t in {1,...,19}{
                \filldraw[brown] (\thetaX, \thetaY) circle (1pt);
                \draw[brown, dotted] (\thetaX, \thetaY) -- (\thetaXPrev, \thetaYPrev);
            }
    \end{tikzpicture}
    \vspace{-1ex}
    \caption{We optimize the quadratic $\LEmp$ \emph{(level sets are filled lines, centered in $\optEmp$)} with gradient descent, starting from $\theta_0$ until we reach the level sets $\alpha$ \emph{(filled line, red)}. However, we evaluate the quality of the estimator through $\LPop$ \emph{(level sets are dashed lines, centered in $\optPop$)}. Doing small step size \emph{(blue dots)} optimizes the direction $e_1$ first, and yields an estimate which is far from $\optPop$ in $\TPop$ norm; doing big step size \emph{(brown dots)} oscillates in the direction $e_1$, but ultimately yields an estimator which is close to $\optPop$ in $\TPop$ norm.}
    \label{fig:estimators_path}
\end{figure}
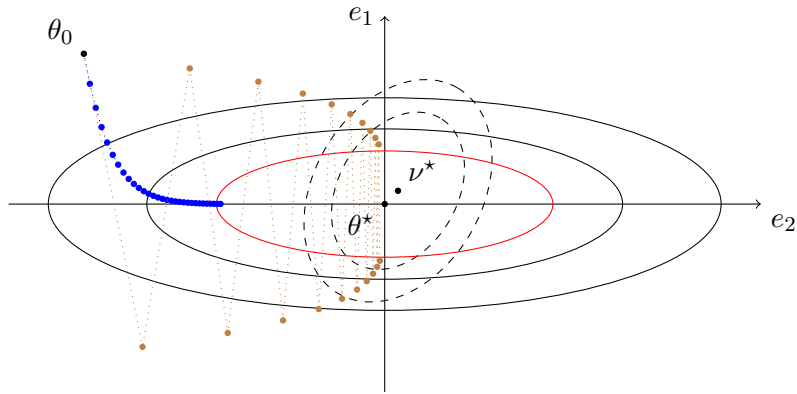

\section{Related Work}
Our main motivation is to better understand the role of learning rate in obtaining good generalization for supervised learning. Even though empirical benefits of large learning rates were first described for neural networks, a few recent works have studied this phenomenon for convex problems. We review here some relevant work.

\paragraph{Setting the learning rate in neural networks.}
Stochastic gradient descent has become the standard tool for optimizing neural networks. When the learning rate is very small, the network evolves in a so-called ``lazy-regime'' where its dynamics are well understood \citep{NEURIPS2019_ae614c55,NEURIPS2018_5a4be1fa} but which fails to capture the good generalization performance observed with large learning rate.
Specifically, this phenomenon has been empirically observed numerous times \citep[see, for instance][]{smith2021on,pmlr-v139-jastrzebski21a,Jastrzebski2020The}; common strategies consist of using first a large learning rate, before annealing it to a smaller value. As a first step towards proving theoretically the effect of choosing large learning rates for training neural networks, \citet{NEURIPS2019_bce9abf2} devise a two-layer neural network model with different set of features where the order in which they are learnt matters, where the previous annealing strategy could be shown to be useful in theory.

\paragraph{A convex perspective.}
Recently, different papers tried to reproduce this phenomenon in convex settings. This is probably thanks to the observation made by \cite{convex_annealing}, where a toy dataset is exhibited, which was the main motivation for this work. However, it fails to capture realistic scenarios where the data distribution is not isotropic, or with non linear data embeddings. \citet{wu2021direction} aimed at filling these gaps but again, relies on the data distribution to be linear, isotropic, with the number of dimension going to infinity in order to have all data points approximately orthogonal; we do not make any of those assumptions. The bound with the condition number we obtain in \cref{thm:main_body} is totally new. Finally, we highlight that we use \emph{plain} gradient descent, and do not need stochasticity to exhibit the big learning rate phenomenon. This is consistent with recent work \citep{DBLP:journals/corr/abs-2109-14119} which shows that SGD is not necessary to obtain state of the art performances, and that GD simply needs a better fine tuning of hyper parameters.


\section{Main Result}

In this section, we show that by performing standard gradient descent on the empirical loss $\LEmp$, choosing a big learning rate will first optimize the smallest eigencomponent of $\TEmp$. That is, the resulting estimator is mostly located on the biggest eigenvector of $\TEmp$. On the other hand, the smaller the learning rate, the more will the solution be located on the small eigencomponents, with biggest eigenvectors of $\TEmp$ being learnt first.

\subsection{Settings and notations}

\paragraph{Gradient descent updates.}
We perform standard gradient descent on the empirical loss $\LEmp$, starting from some $\theta_{0} \in \HH$, with step size $\eta$. We obtain
\begin{equation}\label{eq:gradient_descent_update}
    \forall t \geq 0, ~~ \theta_{t+1} = \theta_{t} - \eta \TEmp (\theta_{t} - \optEmp), ~~ \text{thus} ~~ \theta_{t} - \optEmp = (\II - \eta \TEmp)^t (\theta_{0} - \optEmp).
\end{equation}
This enables a very simple analysis of the training in the eigenbasis of $\TEmp$. We now give a more precise definition of the model in \cref{eq:intro_model}.

\begin{assumption}[Representer theorem assumption]\label{asmpt:notation_operator}
    There is a $n$-dimensional subspace $\HH_n \subseteq \HH$ that is invariant by~$T$---that is, $T \theta$ is in $\HH_n$ for all $\theta$ in $\HH_n$ and such that $\optPop$ is in $\HH_n$.

    We denote by $(\sigma_i, e_i)$ the eigenbasis of the p.d. operator $\TEmp$ restricted to $\HH_n$, with $\sigma_1 > \dots > \sigma_n > 0$, assuming eigenvalues are distinct from each other, and we call $\kappa_\TEmp = \sigma_1/\sigma_n$ the condition number.
    Similarly, the restriction of $\TPop$ to $\HH_n$ is a positive definite operator whose spectrum is $\varsigma_1 > \dots > \varsigma_n$, with condition number $\kappa_\TPop = \varsigma_1 / \varsigma_n$. Since, rescaling the objectives $F$ and $R$ by constant factors does not change their minimizers, we also safely assume that $\sigma_1 = \varsigma_1 = 1$.
\end{assumption}
The model described by~\cref{asmpt:notation_operator} is quite natural, and ensures that a representer theorem holds when learning on a finite training set of $n$ points. It is notably satisfied in classical learning formulations with kernels.

With the notations of \cref{asmpt:notation_operator}, we can now rewrite the update of \cref{eq:gradient_descent_update} along a specific direction $e_i$:
\begin{equation}\label{eq:gradient_descent_update_eigen}
    \forall t \geq 0, ~~ \dotprod{\theta_{t} - \optEmp}{e_i}_\HH = (1 - \eta \sigma_i)^t \dotprod{\theta_{0} - \optEmp}{e_i}_\HH.
\end{equation}
Consider the quantity $\absv{1 - \eta \sigma_i}$. The closer to $0$, the smaller will the $i$-th component of $\theta_t - \optEmp$ on the eigenbasis be when the number of steps $t$ increases. We plot $\absv{1 - \eta \sigma_i}$ in \cref{fig:attenuation_coefficient_function_lr}.


\begin{figure}[h]
    \centering
    \begin{tikzpicture}
     
        \begin{axis}[
            xmin = 0, xmax = 2.1,
            ymin = 0, ymax = 1.1,
            ytick distance = 0.5,
            minor tick num = 1,
            width = .7\textwidth,
            height = 0.3\textwidth,
            xlabel = {$\eta$},
            ylabel = {$\absv{1- \eta \sigma_i}$},
            legend style={
                at={(0,0)},
                anchor=south west,
                nodes={scale=0.5, transform shape},
                },
            xtick={
                2,
                2/(1+.2),
                1
                },
            xticklabels={
                $2/\sigma_1$,
                $2/(\sigma_1+\sigma_n)$,
                $1/\sigma_1$
                },
            ticklabel style={font=\small}
    ]
         
        \addplot[
            domain = 0:2.1,
            samples = 200,
            smooth,
            thick,
            red
        ] {sqrt((1 - x)^2)};
    
        \addplot[
            domain = 0:2.1,
            samples = 200,
            smooth,
            thick,
            red!30!white
        ] {sqrt((1 - .9 * x)^2)};
    
        \addplot[
            domain = 0:2.1,
            samples = 200,
            smooth,
            thick,
            blue!30!white
        ] {sqrt((1 - .3 * x)^2)};
    
        \addplot[
            domain = 0:2.1,
            samples = 200,
            smooth,
            thick,
            blue
        ] {sqrt((1 - .2 * x)^2)};
    
        \addplot[
            dashed,
            black
        ]
        coordinates {(2, 0) (2, 1.1)};
        \addplot[
            dashed,
            black
        ]
        coordinates {(2/(1+.2), 0) (2/(1+.2), 1.1)};

        \legend{
            $\sigma_1 = 1.$, 
            $\sigma_2 = 0.9$, 
            $\sigma_{n-1} = 0.3$, 
            $\sigma_n = 0.2$, 
        }
    \end{axis}
    \end{tikzpicture}
    \vspace{-1ex}
    \caption{Attenuation coefficient $\absv{1 - \eta \sigma_i}$ function of the step size $\eta$, for 4 eigenvalues. For $1 \leq i \leq n$, the attenuation is the quantity by which decays the projection of $\theta - \optEmp$ on $e_i$ at each step. The closer to $0$, the faster will the direction $e_i$ of $\TEmp$ be learnt. In our analysis, the learning rate must satisfy $ \eta_s < 2/(\sigma_1 + \sigma_n) < \eta_b < 2/\sigma_1$.}
    \label{fig:attenuation_coefficient_function_lr}
\end{figure}

Specifically, two ranges of learning rate naturally appear. With a \emph{small} learning rate satisfying $\eta_s < 2/(\sigma_1 + \sigma_n)$, we see on \cref{fig:attenuation_coefficient_function_lr} that the attenuation $\absv{1 - \eta \sigma_i}$ is biggest for the smallest eigenvalue. On the other hand, with a \emph{big} learning rate satisfying $\eta_b > 2/(\sigma_1 + \sigma_n)$, we see that the attenuation is biggest for the biggest eigenvalue. This motivates the next assumption.

\begin{assumption}[Learning rate]\label{asmpt:lr}
    The learning rates satisfy
    \begin{equation}
        0 < \eta_s < \frac{2}{\sigma_1 + \sigma_n} < \eta_b < \frac{2}{\sigma_1}.
    \end{equation}
\end{assumption}

Note that the quantity $\eta = \frac{2}{\sigma_1 + \sigma_n}$ which naturally appears in \cref{fig:attenuation_coefficient_function_lr} is a classical upper bound for proving the convergence of $\sigma_1$-smooth and $\sigma_n$-strongly convex function, of which $\LEmp$ belongs to, see \emph{e.g.} Thm 2.1.15 in \cite{nesterov}. The rate $1/\sigma_1$ is the classical one when we do not have a strong convexity assumption.
This means that in our model, the concept of ``small'' learning rate simply means being of the order of the best possible learning rates available from an optimization point of view, while the concept of ``large'' means being close to values leading to diverging algorithms.

\begin{remark}[Biggest learning rate before divergence.]
    In a recent work, \citet{cohen2021gradient} observed that neural networks trained with gradient descent and ``good'' constant step size $\eta$ were often in a regime where $\sigma_1$ -- the maximum value of the Hessian of the loss -- hovered just around $2/\eta$. It is surprisingly analogous to our model: the range of learning rate we consider for big step sizes in Assumption~\ref{asmpt:lr} enforces $2/\eta_b$ to be close to $\sigma_1$.
\end{remark}

\begin{remark}[Oscillating weights.]
    Geometrically, having $\eta > 1/ \sigma_1$ means that the estimator will oscillate along the direction $e_1$, \textit{i.e.} $\dotprod{\theta_{t} - \optEmp}{e_1}$ will change sign at each iteration. Such behaviour was observed for neural networks trained with classical learning rate strategies, where the weights' sign change in the early phase of training \citep{xing2018walk}.
\end{remark}

Then, the following technical assumption is needed to ensure that there is a signal on the lowest and biggest eigendirection. It is satisfied \emph{e.g.} as soon as the initialization is chosen at random.

\begin{assumption}[Initialization]\label{asmpt:init_iota}
    We assume
    \begin{align*}
        \dotprod{\theta_0 - \optEmp}{e_1}_\HH \neq 0, ~~ \dotprod{\theta_0 - \optEmp}{e_n}_\HH \neq 0.
    \end{align*}
\end{assumption}

Finally, we assume that the target accuracy in terms of optimization is not too small compared to the model error $R(\optPop)$:
\begin{assumption}[Target accuracy and model error]\label{asmpt:target}
    Consider some learning rates $\eta_b$ and $\eta_s$ chosen in the range of Assumption~\ref{asmpt:lr}. We assume that the target accuracy $\alpha$ satisfies $\alpha \leq \alpha_1$, where $\alpha_1$ is given in Definition~\ref{def:alpha_1} and only depends on the spectrum of $\TEmp$ and the learning rates. Furthermore, we assume that
    \begin{equation}\label{eq:asmpt_target_bound}
        \frac{\LPop(\optEmp)}{\alpha} \leq \min \cb{\frac{1}{4}, \frac{\kappa_\TEmp}{72\kappa_\TPop}}.
    \end{equation}
\end{assumption}
This assumption is twofold, providing on the target accuracy $\alpha$ both an \emph{upper} bound (with $\alpha \leq \alpha_1$) and a \emph{lower} bound (with \cref{eq:asmpt_target_bound}). The \emph{lower bound} not being satisfied amounts to $F$ being a poor approximation of $R$: $R(\theta^\star)$ is too big, or the two ellipsoids' centers are far apart in \cref{fig:estimators_path}. Usual machine learning settings often involve large amounts of data, where the limiting factor for good generalization is poor optimization rather than lacking information, so we can expect this assumption to often hold in practice.
The \emph{upper bound} stems from the proof technique of our main result in Theorem~\ref{thm:main_body}, a brief sketch of which is available in \cref{sec:sketch_of_proof}. The proof relies on the fact that sufficiently many steps $t_s$ (resp. $t_b$) are made before the gradient descent is stopped, so that the biggest (resp. the smallest) eigen component is attenuated enough. To ensure this, we can \emph{(i)} either make the learning rate smaller\footnote{Informally, by making $\eta_b \to 2/\sigma_1$ (resp. $\eta_s \to 0$) we need bigger $t_b$ (resp. $t_s$) to achieve optimization error $\alpha$. Thus, $\alpha < \alpha_1$ can be replaced with $\eta_b > 2/\sigma_1 - \epsilon$ (resp. $\eta_s < \epsilon$).}, or \emph{(ii)} take the target error $\alpha$ sufficiently small. We choose the latter, hence the assumption $\alpha \leq \alpha_1$.



\subsection{The Main Theorem}

We now give our main theorem, whose proof is given in \cref{sec:appendix_proof_main_thm}.
\begin{theorem}[Benefits of large learning rates]\label{thm:main_body}
    Consider the different quantities defined in Assumptions~\ref{asmpt:notation_operator}, \ref{asmpt:lr}, \ref{asmpt:init_iota} and~\ref{asmpt:target}.
    Then, perform the gradient descent updates of~\cref{eq:gradient_descent_update}, with either small step size $\eta_s$ or big step size $\eta_b$, and stop as soon as $F(\theta_{t}) \leq \alpha$, assuming that the resulting estimators satisfy $F(\theta_s) \geq \alpha/2$ and $F(\theta_b) \geq \alpha/2$\footnote{This is a mild assumption that could be removed at the price of cumbersome technical details.}.
    Then,
    \begin{equation}\label{eq:thm_body_main_bound}
        \LPop(\theta_b) - \LPop(\nu^\star) ~\leq~ 34 \frac{\kappa_{\TPop}}{\kappa_\TEmp} (\LPop(\theta_s)-\LPop(\nu^\star)). 
    \end{equation}
\end{theorem}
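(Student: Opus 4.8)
I work in the eigenbasis $(e_i)$ of $\TEmp$ restricted to $\HH_n$ and track the residuals. Let $t_b$ (resp.\ $t_s$) be the stopping time of the run with step size $\eta_b$ (resp.\ $\eta_s$), and set $\delta_b\eqdef\theta_b-\optEmp=(\II-\eta_b\TEmp)^{t_b}(\theta_0-\optEmp)$, $\delta_s\eqdef\theta_s-\optEmp=(\II-\eta_s\TEmp)^{t_s}(\theta_0-\optEmp)$, so that by \cref{eq:gradient_descent_update_eigen} we have $\dotprod{\delta_b}{e_i}_\HH=(1-\eta_b\sigma_i)^{t_b}\dotprod{\theta_0-\optEmp}{e_i}_\HH$ and likewise for $\delta_s$. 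Dropping the harmless additive constant $\minLEmp$ (so $F(\theta)=\tfrac12\norm{\theta-\optEmp}_\TEmp^2$), the hypotheses give $\alpha/2\le F(\theta_b),F(\theta_s)\le\alpha$, and $\LPop(\optPop)=0$. Write $\widetilde R(\theta)\eqdef\tfrac12\norm{\theta-\optEmp}_\TPop^2$ for $\LPop$ recentred at $\optEmp$. The first step reduces the statement to comparing $\widetilde R(\theta_b)$ with $\widetilde R(\theta_s)$, using two elementary observations: \emph{(i)} the triangle inequality for $\norm{\cdot}_\TPop$ together with $\LPop(\optEmp)=\tfrac12\norm{\optEmp-\optPop}_\TPop^2$ gives $\LPop(\theta)\le 2\widetilde R(\theta)+2\LPop(\optEmp)$, and $\LPop(\theta)\ge\big(\sqrt{\widetilde R(\theta)}-\sqrt{\LPop(\optEmp)}\big)^2$ whenever $\widetilde R(\theta)\ge\LPop(\optEmp)$; \emph{(ii)} since $\sigma_1=\varsigma_1=1$ and the spectrum of $\TPop$ on $\HH_n$ lies in $[1/\kappa_\TPop,1]$, any $\delta\in\HH_n$ obeys $\tfrac1{2\kappa_\TPop}\norm{\delta}_\HH^2\le\widetilde R(\optEmp+\delta)\le\tfrac12\norm{\delta}_\HH^2$. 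So it is enough to upper bound $\norm{\delta_b}_\HH^2$, lower bound $\norm{\delta_s}_\HH^2$, and absorb the $\LPop(\optEmp)$ terms with \cref{asmpt:target}.

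\paragraph{The two eigen-concentration estimates.}
This is where the step size matters, through \cref{asmpt:lr} and the discussion around \cref{fig:attenuation_coefficient_function_lr}: for $\eta_b>\tfrac{2}{\sigma_1+\sigma_n}$ the attenuation $\absv{1-\eta_b\sigma_i}$ is strictly largest at $i=1$, so $e_1$ is the slowest direction, while for $\eta_s<\tfrac{2}{\sigma_1+\sigma_n}$ it is strictly largest at $i=n$. For the large step, $\sigma_1=1$ and $F(\theta_b)\le\alpha$ give $\dotprod{\delta_b}{e_1}_\HH^2\le\norm{\delta_b}_\TEmp^2\le 2\alpha$, while every other coordinate is additionally damped by $\big(\absv{1-\eta_b\sigma_i}/\absv{1-\eta_b\sigma_1}\big)^{2t_b}$, a power whose base lies in $[0,1)$. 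As $F(\theta_t)\ge\tfrac12\dotprod{\theta_0-\optEmp}{e_1}_\HH^2\,\absv{1-\eta_b\sigma_1}^{2t}$ with $\dotprod{\theta_0-\optEmp}{e_1}_\HH\neq 0$ (\cref{asmpt:init_iota}), the stopping rule forces $t_b$ to grow at least like $\log(1/\alpha)$; then $\alpha\le\alpha_1$ makes $t_b$ large enough that $\sum_{i\ge2}\dotprod{\delta_b}{e_i}_\HH^2\le\alpha$, whence $\norm{\delta_b}_\HH^2\le 3\alpha$. Symmetrically, for the small step $e_n$ is slowest, the bases $\absv{1-\eta_s\sigma_i}/\absv{1-\eta_s\sigma_n}$ ($i<n$) lie in $[0,1)$, and a large $t_s$ (again via $\alpha\le\alpha_1$) forces the $e_n$-coordinate to carry at least, say, half of $\norm{\delta_s}_\TEmp^2=2F(\theta_s)\ge\alpha$; since its coefficient is $\sigma_n=1/\kappa_\TEmp$, this gives $\norm{\delta_s}_\HH^2\ge\dotprod{\delta_s}{e_n}_\HH^2\ge\tfrac{\alpha}{2}\kappa_\TEmp$.

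\paragraph{Putting it together.}
With the reduction above, $\LPop(\theta_b)-\LPop(\optPop)\le 2\widetilde R(\theta_b)+2\LPop(\optEmp)\le\norm{\delta_b}_\HH^2+2\LPop(\optEmp)\le 3\alpha+2\LPop(\optEmp)$, and $\widetilde R(\theta_s)\ge\tfrac1{2\kappa_\TPop}\norm{\delta_s}_\HH^2\ge\tfrac{\alpha\kappa_\TEmp}{4\kappa_\TPop}$. The lower bound in \cref{eq:asmpt_target_bound} gives $\LPop(\optEmp)\le\tfrac14\widetilde R(\theta_s)$, so $\LPop(\theta_s)-\LPop(\optPop)\ge\big(\sqrt{\widetilde R(\theta_s)}-\sqrt{\LPop(\optEmp)}\big)^2\ge c\,\widetilde R(\theta_s)\ge c\,\tfrac{\alpha\kappa_\TEmp}{4\kappa_\TPop}$ for an explicit $c>0$. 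Dividing, the $3\alpha$ contribution produces exactly a factor $\kappa_\TPop/\kappa_\TEmp$; the $2\LPop(\optEmp)$ contribution is controlled by the two branches of the $\min$ in \cref{eq:asmpt_target_bound}: if $\kappa_\TEmp/\kappa_\TPop$ is large, $\LPop(\optEmp)\le\alpha/4$ is $O(\alpha)$, hence $O(\kappa_\TPop/\kappa_\TEmp)\cdot\LPop(\theta_s)$; if $\kappa_\TEmp/\kappa_\TPop$ is small, $\LPop(\optEmp)\le\tfrac{\alpha\kappa_\TEmp}{72\kappa_\TPop}$ is $O(\kappa_\TPop/\kappa_\TEmp)\cdot\widetilde R(\theta_s)$ directly. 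Tracking the numerical constants through these steps gives the factor $34$ in \cref{eq:thm_body_main_bound}.

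\paragraph{Main obstacle.}
The only delicate step --- and the sole place where the upper bound $\alpha\le\alpha_1$ of \cref{asmpt:target} is used --- is the quantitative ``large stopping time'' claim above: one must make precise how large $t_b$ (resp.\ $t_s$) must be so that all non-$e_1$ (resp.\ non-$e_n$) coordinates of the residual are negligible, which amounts to bounding the ratios $\big(\absv{1-\eta\sigma_i}/\absv{1-\eta\sigma_j}\big)^{2t}$ in terms of the gaps between consecutive attenuation factors (and of the initialization coefficients), and then checking that the logarithmic lower bound on the stopping time forced by $F(\theta)\le\alpha$ dominates this requirement once $\alpha\le\alpha_1$; this is precisely what the definition of $\alpha_1$ (Definition~\ref{def:alpha_1}) encodes. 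The rest is bookkeeping: propagating the explicit constants of \cref{asmpt:target} and of the norm equivalences to reach the value $34$ rather than an unspecified constant.
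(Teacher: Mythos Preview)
Your proposal is correct and follows essentially the same route as the paper: both arguments reduce to two ``eigen-concentration'' lemmas (the paper's Lemmas~\ref{lem:technical_bound_big_lr} and~\ref{lem:technical_bound_small_lr}) showing that $\delta_b$ is dominated by $e_1$ and $\delta_s$ by $e_n$ once $\alpha\le\alpha_1$, then combine these via the triangle inequality for $\norm{\cdot}_\TPop$ and absorb the $\LPop(\optEmp)$ terms with \cref{asmpt:target}. The only cosmetic difference is that you pass through the crude sandwich $\tfrac{1}{2\kappa_\TPop}\norm{\delta}_\HH^2\le\widetilde R\le\tfrac12\norm{\delta}_\HH^2$ whereas the paper bounds $\norm{\mu_1 e_1}_\TPop$ and $\norm{\mu_n e_n}_\TPop$ directly; either way the constants work out (the paper gets $17\,c_\alpha$ with $c_\alpha\le 2$).
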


Recall that $\LPop$ has minimum $0$, so \cref{eq:thm_body_main_bound} essentially guarantees better performance of $\theta_b$.
The estimator obtained with big step size is better that the one obtained with small step size as soon as the operator $\TEmp$ is ill-conditioned. This is notably the case when doing classification with kernel methods with the ridge estimator, as we discuss in \cref{sec:application_kernel_low_noise}.

\subsection{Discussion}

\paragraph{Implications in classification with separable classes.} In the context of the example discussed in the introduction, we see clearly that, under the simplifying hypothesis that the population error behaves similarly to the empirical error, the choice of the learning step has the unexpected impact of reducing the Hilbert norm by a multiplicative constant that can be significantly smaller than $1$, leading to an exponential improvement in the classification error.


\paragraph{Comparison with analysis techniques based on learning rate annealing.}
Most recent approa\-ches to explain the role of the learning rate in the generalization \citep{NEURIPS2019_bce9abf2,wu2021direction} rely on \emph{annealing} the learning rate: the first phase of the training is carried out with a large step size, before it is discounted to a lower value. We do not need such mechanism in our theoretical analysis, which turns to be simpler with a unique value for the step size. With annealing, our analysis could sum up the following way: do $t$ steps with learning rate greater or equal to $2/\sigma_1$, so that all attenuation coefficient $\absv{1 - \eta_b \sigma_i}$ in \cref{eq:gradient_descent_update_eigen} are smaller than $1$ except for the first (few) eigencomponent. Doing so, all eigendirections would be optimized except for the first (few) ones. Then, anneal the learning rate until the $\alpha$ level set of the loss are reached.

\paragraph{Discussion on the complexity.}
The fact that the result of Theorem~\ref{thm:main_body} relies on the condition number can be somewhat surprising. Indeed, we may wonder why the other eigenvalues of the spectrum do not play a role in the result. This is in fact due to the proof technique, which relies on comparing the estimator mostly located on $e_1$ (for big learning rates) and the one mostly on $e_n$ (for the small learning rates). Thus, the distance $\sigma_{n-1} - \sigma_n$ and $\sigma_1 - \sigma_2$ play a role in the \emph{complexity} of the gradient descent, which is highlighted by the next lemma, which is a consequence of Lemmas~\ref{lem:technical_bound_big_lr} and \ref{lem:technical_bound_small_lr} in \cref{sec:appendix_proof_main_thm}.

\begin{lemma}[Computational complexity]\label{lem:computational_complexity_ts_tb}
    Under the settings of Theorem~\ref{thm:main_body}, denote $t_s$ (resp. $t_b)$ the number of steps necessary to obtain $\theta_s$ (resp. $\theta_b$). Then,
    \begin{equation}
        t_s \geq O\p{\log \frac{1 - \eta_s\sigma_{n}}{1 - \eta_s\sigma_{n-1}}},
        ~~~
        t_b \geq O\p{\log \frac{\absv{\eta_b \sigma_1 - 1}}{\max \cb{\eta_b \sigma_2 - 1, 1 - \eta_b \sigma_n}}}.
    \end{equation}
\end{lemma}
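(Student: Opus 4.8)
The plan is to read both bounds off the coordinatewise description of the dynamics, feeding in the structural facts already needed for \cref{thm:main_body}. Writing $a_i \eqdef \dotprod{\theta_0-\optEmp}{e_i}_\HH$, \cref{eq:gradient_descent_update_eigen} gives
\begin{equation*}
    F(\theta_t) - \minLEmp \;=\; \tfrac12\sum_{i=1}^n \sigma_i\,(1-\eta\sigma_i)^{2t}\,a_i^2, \qquad \eta\in\{\eta_s,\eta_b\},
\end{equation*}
and since every summand is nonnegative we have $F(\theta_t)-\minLEmp \ge \tfrac12\,\sigma_j\,(1-\eta\sigma_j)^{2t}\,a_j^2$ for each $j$. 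By \cref{asmpt:init_iota} we may use $j=1$ or $j=n$, and the useful choice is the index with the \emph{largest} attenuation coefficient $\absv{1-\eta\sigma_j}$: this is $j=n$ for the small step, since $\eta_s<2/(\sigma_1+\sigma_n)$ makes $i\mapsto\absv{1-\eta_s\sigma_i}$ largest at $i=n$, and $j=1$ for the big step, since $2/(\sigma_1+\sigma_n)<\eta_b<2/\sigma_1$ makes $\absv{\eta_b\sigma_1-1}$ the largest, exactly as on \cref{fig:attenuation_coefficient_function_lr}. This already produces crude lower bounds on $t_s$ and $t_b$: stopping at accuracy $\alpha$ forces the slowest coordinate to contract to order $\alpha$, giving $t_\bullet$ of order $\log(1/\alpha)/\log\tfrac{1}{\absv{1-\eta\sigma_j}}$ — but not yet the refined bounds in the statement.

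The refined bounds come from the \emph{second}-slowest direction, which is precisely what \cref{lem:technical_bound_big_lr,lem:technical_bound_small_lr} control. Those lemmas establish — this being the mechanism underlying \cref{thm:main_body} — that at the stopping time the residual $\theta_s-\optEmp$ is concentrated on $e_n$ and $\theta_b-\optEmp$ on $e_1$, and for such concentration the component on the next-slowest eigendirection must have dropped below a fixed fraction $c<1$ of the component on the slowest one. For the small step the next-slowest direction is $e_{n-1}$, hence $\big((1-\eta_s\sigma_{n-1})/(1-\eta_s\sigma_n)\big)^{2t_s}\le c$; for the big step the second-largest attenuation is $\max\{\eta_b\sigma_2-1,\,1-\eta_b\sigma_n\}$ — the first term governing when $\eta_b$ is close to $2/\sigma_1$ and the second when $\eta_b$ is close to $2/(\sigma_1+\sigma_n)$ — hence $\big(\max\{\eta_b\sigma_2-1,\,1-\eta_b\sigma_n\}/\absv{\eta_b\sigma_1-1}\big)^{2t_b}\le c'$. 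Taking logarithms of these two inequalities, using that each ratio lies strictly in $(0,1)$, and absorbing the constants $c,c'$ into $O(\cdot)$, gives the two displayed lower bounds. The role of \cref{asmpt:target} (namely $\alpha\le\alpha_1$, with $\alpha_1$ from \cref{def:alpha_1}) is exactly to certify that the stopping time is large enough for this concentration to be in force, so that the two inequalities above are legitimately available.

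The routine part is the logarithmic rearrangement; the step I expect to be the real obstacle is making ``concentrated on a single eigendirection'' quantitative — that is, pinning down the absolute constants $c,c'$ (equivalently, unpacking the definition of $\alpha_1$) and checking that the first time $F(\theta_t)$ drops below $\alpha$ cannot occur before the two slowest directions have separated, rather than earlier. Since this bookkeeping is already done in \cref{lem:technical_bound_big_lr,lem:technical_bound_small_lr}, the efficient route here is to quote those two lemmas and then collapse all spectrum- and $\alpha$-dependent factors into the $O(\cdot)$ notation, which yields the claimed bounds on $t_s$ and $t_b$.
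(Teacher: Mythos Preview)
Your proposal is correct and takes essentially the same route as the paper: the lemma is stated there as ``a consequence of \cref{lem:technical_bound_big_lr,lem:technical_bound_small_lr}'', and the bounds are simply read off \cref{eq:lower_bound_t_big,eq:lower_bound_t_small}, which is exactly where you land.

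One small remark on your intermediate heuristic. You argue that concentration on $e_n$ forces $\big((1-\eta_s\sigma_{n-1})/(1-\eta_s\sigma_n)\big)^{2t_s}\le c$; strictly speaking the quantity controlled by $\epsilon_s$ is $(\iota_{n-1}/\iota_n)^2\big((1-\eta_s\sigma_{n-1})/(1-\eta_s\sigma_n)\big)^{2t_s}$, and \cref{asmpt:init_iota} only guarantees $\iota_1,\iota_n\neq 0$, not $\iota_{n-1}\neq 0$. The paper avoids this by running the implication in the other direction: the stopping criterion $F(\theta_s)\le\alpha$ forces $t_s\ge t_2$, and the upper bound $\alpha\le\alpha_1$ in \cref{asmpt:target} is engineered precisely so that $t_2\ge t_1$, with $t_1$ carrying the denominator $\log\tfrac{1-\eta_s\sigma_n}{1-\eta_s\sigma_{n-1}}$. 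Since your final step is to quote the two lemmas anyway, this does not affect the correctness of your argument.
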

In particular, if $s= \sigma_{n-1} - \sigma_n$, then $t_s = O(1/s)$, and the same holds for $t_b$ with $s =\sigma_1 - \sigma_2$.

We emphasize that the complexity incurred by using learning rates satisfying Assumption~\ref{asmpt:lr} may be very large. However, the motivation of our work is to study an existing practice in deep learning (using large step sizes that are not optimal in terms of optimization of the training loss, but which are better in terms of test loss). We study this phenomenon from a kernel perspective to be able to leverage theoretical tools. Analysing the computational complexity to obtain a given statistical accuracy in convex problems is a separate and well-documented problem, and we do not necessarily advocate the use of very large step sizes for kernel regression.

\paragraph{Nystr\"om projections.}
In practice projections techniques (known as Nyström projections) are used to reduce the dimension and avoid a cost quadratic in $n$ for gradient descent. Given some $m \ll n$, this amounts to choosing $m$ anchor points among the data and approximating the kernel matrix $K$ with a rank-$m$ matrix $\tilde{K} = K_{nm}K_m^{-1}K_{nm}^\top$. Fortunately, this is still encompassed in the framework of the model in \cref{eq:intro_model}. Indeed, the resulting problem is still a quadratic problem, for which a representer theorem holds -- the solution lies on the span of the anchor points.

\paragraph{Beyond the square loss.}
The result of \cref{thm:main_body} relies on using the square loss for obtaining a closed-form expression of the gradient descent update. This is fairly common in learning theory and already provides interesting hindsight. A natural extension to other loss functions would be to consider local quadratic approximations using the Hessian. This is for instance what has been done for kernel ridge regression and self-concordant loss functions by \cite{10.1214/09-EJS521}.

\subsection{Sketch of proof for the main result}
\label{sec:sketch_of_proof}
The detailed proof is delayed to \cref{sec:appendix_proof_main_thm}. The idea is the following: by tuning the number of steps, we can have the estimator trained with small (resp. big) step size mostly aligned with the smallest (resp. biggest) eigenvector.

\paragraph{The directional bias induced by the step size.}
As shown in \cref{fig:attenuation_coefficient_function_lr}, having the learning rates satisfying Assumption~\ref{asmpt:lr} ensures that the quantities
\begin{equation}
    \epsilon_b^2 =
    \frac{\sum_{2 \leq i \leq n} \dotprod{\theta_b - \optEmp}{e_i}^2}{\dotprod{\theta_b - \optEmp}{e_1}^2},
    ~~
    \epsilon_s^2 =
    \frac{\sum_{1 \leq i \leq n-1} \dotprod{\theta_s - \optEmp}{e_i}^2}{\dotprod{\theta_s - \optEmp}{e_n}^2},
\end{equation}
can be made arbitrarily small, while Assumption~\ref{asmpt:init_iota} ensures they are well defined. $\epsilon_b$ (resp. $\epsilon_s$) quantifies to what extent is $\theta_b - \optEmp$ (resp. $\theta_s - \optEmp$) mostly on $e_1$ (resp. $e_n$). For instance, in the extreme case where $\epsilon_b = 0$ (resp. $\epsilon_s = 0$), then $\theta_b - \optEmp = x e_1, x\in \RR$ (resp. $\theta_s - \optEmp = y e_n, y \in \RR$). To see why it can be made small, refer to the closed-form expression of $\theta_{(\eta, t)}$ in \cref{eq:gradient_descent_update,eq:gradient_descent_update_eigen}, and assume for simplification that $\dotprod{\theta_0 - \optEmp}{e_i} = c_0$ for all $i$, with $c_0$ some constant factor. This gives
\begin{equation}
    \epsilon_b^2 =
    \frac{\sum_{2 \leq i \leq n} (1 - \eta_b \sigma_i)^{2t}}{(1-\eta_b \sigma_1)^{2t}},
    ~~
    \epsilon_s^2 =
    \frac{\sum_{1 \leq i \leq n-1} (1 - \eta_s \sigma_i)^{2t}}{(1-\eta_s \sigma_n)^{2t}}.
\end{equation}
Following the discussion of Assumption~\ref{asmpt:lr}, we have that $|1-\eta_b \sigma_1| > |1 - \eta_b \sigma_i|$ for all $i > 1$, and $|1 - \eta_s \sigma_n| > |1 - \eta_s \sigma_n|$ for all $i < n$. Thus, we have that for any $\delta > 0$,
\begin{equation}
    t_b \geq \frac{1}{2} \frac{\log 1/\delta^2}{\log \frac{\eta_b \sigma_1 - 1}{\max_{2 \leq i \leq n} |1 - \eta_b \sigma_i|}} \implies \epsilon_b^2 \leq \delta^2,
    ~~
    t_s \geq \frac{1}{2} \frac{\log 1/\delta^2}{\log \frac{1 - \eta_s \sigma_n}{\max_{1 \leq i \leq n-1} |1 - \eta_s \sigma_i|}} \implies \epsilon_s^2 \leq \delta^2.
\end{equation}

\paragraph{Different risk on the $\alpha$-level sets.}
In this paragraph, assume \emph{(A)} $\theta_b = x e_1$, and $\theta_s = y e_n$, that is $\epsilon_b = \epsilon_s = 0$, \emph{(B)} that $\LPop(\optEmp) = 0$ and \emph{(C)} that
\begin{equation}\label{eq:sketch_proof_level_sets}
    \alpha/2 \leq \LEmp(\theta_b) \leq \alpha, ~~ \alpha/2 \leq \LEmp(\theta_s) \leq \alpha.
\end{equation}
Then, we have for $\theta_b$ that
\begin{equation}\label{eq:sketch_proof_big_lr}
    \LPop(\theta_b) = \frac{1}{2} \norm{x e_1}_\TPop^2 \leq \frac{1}{2} \varsigma_1 x^2 \leq \frac{\alpha}{2} \frac{\varsigma_1}{\sigma_1},
\end{equation}
where we used \cref{eq:sketch_proof_level_sets} to bound $\alpha \geq \LEmp(\theta_b) = 1/2 \cdot \sigma_1 x^2$. We do the same with $\theta_s$, this time using $1/2 \cdot \sigma_n y^2 = \LEmp(\theta_s) \leq \alpha$ to obtain
\begin{equation}\label{eq:sketch_proof_small_lr}
    \LPop(\theta_s) = \frac{1}{2} \norm{y e_n}_\TPop^2 \geq \frac{1}{2} \varsigma_n y^2 \geq \alpha \frac{\sigma_n}{\varsigma_n},
\end{equation}
Finally, combining \cref{eq:sketch_proof_big_lr} with \cref{eq:sketch_proof_small_lr}, we obtain
\begin{equation*}
    \LPop(\theta_b) \leq 2 \frac{\kappa_\TPop}{\kappa_\TEmp} \LPop(\theta_s).
\end{equation*}

\paragraph{Ensuring both conditions can be met together.}
We now point out the main differences between this simplified sketch of proof and the rigorous proof in \cref{sec:appendix_proof_main_thm}. First of all, we do not have \emph{(A)} but rather an approximation of it, with $\epsilon_s \leq \delta$ and $\epsilon_b \leq \delta$. Second, we do not have \emph{(B)} and rather take into account the error $\LPop(\optEmp)$ to derive Theorem~\ref{thm:main_body}. Finally, and most importantly, we check that we can have \emph{low $\epsilon$ and $F(\theta) \geq \alpha/2$ at the same time}. Indeed, we need a big number of iterations to achieve low $\epsilon_b$ or $\epsilon_s$. This implies better optimization of the objective function $\LEmp$. To prevent this, we can either tune the learning rate (having $\eta_s$ close to $0$ and $\eta_b$ close to $2 / \sigma_1$) or provide an upper bound on $\alpha$. We choose the later, hence the hypothesis $\alpha \leq \alpha_1$ in Assumption~\ref{asmpt:target}.

\section{Comparison with Results in Kernel Regression}
\label{sec:application_kernel_low_noise}

To provide some intuition over the result of Theorem~\ref{thm:main_body}, we consider its implications in a supervised learning setting, specifically classification on a low-noise dataset.

\subsection{Background on the kernel ridge regression estimator}
We consider standard settings. We denote by $\XX \subseteq \RR^d$ the input space, and $\YY = \cb{-1, 1}$ the output space. We draw $n$ i.i.d samples $(x_i, y_i)_{1 \leq i \leq n}$ from an unknown distribution $\rho$ on $\XY$, and we search a prediction function $\theta \in \HH$, where $\HH$ is a RKHS with feature map $\phi: \XX \to \HH$. We assume the kernel to be bounded by a constant $C_K$. See \cite{aronszajn50reproducing} for a precise account on RKHS. We use the square loss as loss function. In order to find a function $\theta$ which maps elements of $\XX$ to $\YY$, we optimize the (regularized) \emph{empirical risk} $\LEmp$, defined for all $\theta \in \HH$ and $\lambda \geq 0$ a regularization parameter with
\begin{equation}\label{eq:def_empirical_risk}
    \LEmp\p{\theta} = \frac{1}{n}\sum_{i=1}^n \frac{1}{2} \p{\theta(x_i) - y_i}^2 + \frac{\lambda}{2} \norm{\theta}^2.
\end{equation}
The minimizer $\optEmp$ of $\LEmp$ is always well defined as soon the training samples $x_i$ are distinct (in the case $\lambda = 0$), which we assume now. We will be minimizing $\LEmp$ with gradient descent when we are in fact interested in minimizing the \emph{test error}
\begin{equation}\label{eq:def_test_error}
    B(\theta) = {\mathbb P}[\sign[\theta(x) \neq y]].
\end{equation}

We will relate the test error and the empirical risk to quadratics forms in $\HH$ by means of other quantities. To do that, we first define the population loss along with its regularized version with
\begin{equation}\label{eq:def_pop_loss}
    \forall \theta, ~~ \mathcal{P} (\theta) = \int_{\XY} \frac{1}{2} \p{\theta(x) - y}^2 \dd\rho(x,y), ~~ \mathcal{P}_\lambda(\theta) = \mathcal{P}(\theta) + \frac{\lambda}{2} \norm{\theta}^2.
\end{equation}
The minimizer of $\mathcal{P}$ on $\LL_2(\rho_x)$ is the regression function $\regFunc(x) = \EE[y|x]$. It is an element of $\LL_2(\rho_x)$ but not necessary of $\HH$. We denote by $\optPop$ the minimizer of $\mathcal{P}_\lambda$ on $\HH$. If $\lambda>0$, it is always well defined; otherwise, with $\inclOp : \HH \to \LL_2(\rho_x)$ the inclusion operator, $\optPop$ exists as soon as the projection of the regression function on the closure of the range of $\inclOp$ belongs to the range of $\inclOp$. See \cite{JMLR:v6:devito05a} for a precise account.

In the following, we assume $\lambda \geq 0$ and that $\optPop$ is well defined, and we consider specific assumptions on $\rho$, \textit{via} assumptions on $\optPop$ and $\regFunc$.

\subsection{Relating supervised learning with quadratic forms in \texorpdfstring{$\HH$}{H}}

To relate the problems of \cref{eq:def_empirical_risk,eq:def_test_error} to quadratic forms in the RKHS, we simply need to introduce the \emph{empirical covariance operator}, with
\begin{equation}\label{eq:def_empirical_covariance}
    \TEmp = \frac{1}{n} \sum_{i=1}^n \phi(x_i) \otimes \phi(x_i).
\end{equation}
Then, as optimizing the Hilbert norm is a good proxy for optimizing the test error (following Example~\ref{ex:intro_classif} and Lemma~\ref{lem:small_hilbert_norm_statistically_optimal} in \cref{sec:appendix_low_noise_loucas_results}), we define
\begin{equation}\label{eq:def_empirical_risk_quadratic}
    \LEmp\p{\theta} = \frac{1}{2} \norm{\theta - \optEmp}_{\TEmp}^2 + \minLEmp, ~~ \LPop(\theta) = \frac{1}{2} \norm{\theta - \optPop}_\HH^2, ~~ \text{with} ~~ \minLEmp =  \frac{1}{2n} y^\top \csb{\II_n - \frac{K}{n}\p{\frac{K}{n} + \lambda}^{-1}}  y.
\end{equation}
$K$ is the kernel matrix ($K/n$ shares the same spectrum than $\TEmp$), and the minimum $\LEmp(\optEmp) = \minLEmp$ is $0$ when $\lambda = 0$. We are in the settings of the model of \cref{eq:intro_model}, and we can readily apply Theorem~\ref{thm:main_body} with $\TPop$ being the identity operator $\II_\HH$.

\begin{corollary}[Benefit of big step size for classification task.]\label{cor:numerical_bound_hilbert_norm}
    Under the settings of Theorem~\ref{thm:main_body} with the additional assumption that $\TPop = \II_{\HH_n}$, we have
    \begin{equation*}
        \LPop(\theta_b)- \LPop(\nu^\star) ~\leq~ \frac{34}{\kappa_\TEmp} (\LPop(\theta_s) - \LPop(\nu^\star)).
    \end{equation*}
\end{corollary}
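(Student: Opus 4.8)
The plan is to apply \cref{thm:main_body} verbatim and simply evaluate the condition number $\kappa_\TPop$ under the extra hypothesis. Since $\TPop = \II_{\HH_n}$, the restriction of $\TPop$ to $\HH_n$ has all eigenvalues equal to $1$; in particular $\varsigma_1 = \varsigma_n = 1$, so $\kappa_\TPop = \varsigma_1/\varsigma_n = 1$. Substituting this into \cref{eq:thm_body_main_bound} gives
\[
    \LPop(\theta_b)-\LPop(\nu^\star) \;\leq\; 34\,\frac{\kappa_\TPop}{\kappa_\TEmp}\bigl(\LPop(\theta_s)-\LPop(\nu^\star)\bigr) \;=\; \frac{34}{\kappa_\TEmp}\bigl(\LPop(\theta_s)-\LPop(\nu^\star)\bigr),
\]
which is exactly the claim. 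One should also check that the hypotheses inherited from \cref{thm:main_body} remain consistent: \cref{asmpt:target} specializes harmlessly, the bound \cref{eq:asmpt_target_bound} becoming $\LPop(\optEmp)/\alpha \leq \min\{1/4,\ \kappa_\TEmp/72\}$.

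The only delicate point is that \cref{asmpt:notation_operator} requires the spectrum of $\TPop$ restricted to $\HH_n$ to consist of \emph{distinct} eigenvalues, which clashes with $\TPop = \II_{\HH_n}$ as soon as $n\geq 2$. This is not a real obstacle, and I would dispose of it in either of two ways. First, inspecting the proof of \cref{thm:main_body} (whose skeleton is recalled in \cref{sec:sketch_of_proof}), the operator $\TPop$ enters only through its extreme eigenvalues $\varsigma_1$ and $\varsigma_n$, as in \cref{eq:sketch_proof_big_lr} and \cref{eq:sketch_proof_small_lr}; the gaps $\varsigma_i-\varsigma_{i+1}$ are never used, so the distinctness assumption on the spectrum of $\TPop$ can simply be dropped. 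Alternatively, one runs a continuity argument: pick operators $\TPop^{(\epsilon)}$ on $\HH_n$ with distinct eigenvalues in $[1-\epsilon,1]$, apply \cref{thm:main_body} to get the bound with constant $34\,\kappa^{(\epsilon)}/\kappa_\TEmp$ where $\kappa^{(\epsilon)} = 1/(1-\epsilon)$, and let $\epsilon\to 0$; the iterates $\theta_b,\theta_s$ depend only on $\TEmp$ and the step sizes, hence are untouched, and both sides converge to the quantities for $\TPop = \II_{\HH_n}$.

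In short, there is no computational difficulty here; the corollary follows by substitution, and the sole subtlety is the cosmetic incompatibility between $\TPop = \II$ and the distinct-eigenvalue convention of \cref{asmpt:notation_operator}, handled as above. The reason for recording it as a separate statement is interpretive: it isolates the classification scenario of \cref{ex:intro_classif} and \cref{eq:def_empirical_risk_quadratic}, where $\LPop$ is the squared Hilbert norm so that $\kappa_\TPop = 1$ and the factor $34\,\kappa_\TPop/\kappa_\TEmp$ collapses to $34/\kappa_\TEmp$, exhibiting the gain of $\theta_b$ over $\theta_s$ whenever $\TEmp$ is ill-conditioned.
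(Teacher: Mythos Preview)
Your proposal is correct and matches the paper's approach exactly: the paper simply states that one ``can readily apply Theorem~\ref{thm:main_body} with $\TPop$ being the identity operator $\II_\HH$'' and records the corollary without further argument. You are in fact more careful than the paper in flagging and resolving the tension between $\TPop = \II_{\HH_n}$ and the strict-inequality convention $\varsigma_1 > \dots > \varsigma_n$ in \cref{asmpt:notation_operator}; the paper never comments on this, and your observation that the proof of \cref{thm:main_body} uses only the extreme eigenvalues $\varsigma_1,\varsigma_n$ of $\TPop$ is the right way to see that the assumption is harmless here.
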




\section{Experiments}

We evaluate the claims of \cref{sec:application_kernel_low_noise} on CKN-MNIST, a dataset consisting of the MNIST dataset embedded by a convolutional kernel network \citep{mairal:hal-01387399}. It allows for a realistic use-case, with classification accuracy close to $99\%$, by necessitating a reasonable number of samples $n = 1000$.
On CKN-MNIST, we achieve $98.5\%$ test accuracy with the Gaussian kernel with scale parameter $30$ and no regularization. Adding regularization only improves the test accuracy by $0.04\%$.

\begin{figure}[p]
    \centering
    \includegraphics[scale=.9]{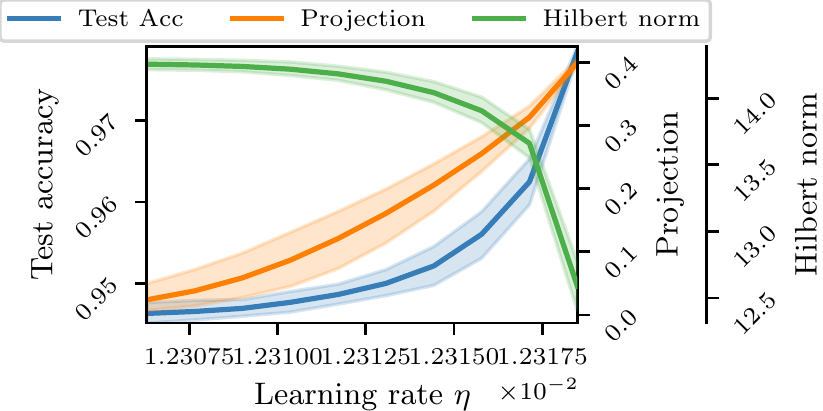}
    ~~
    \includegraphics[scale=.9]{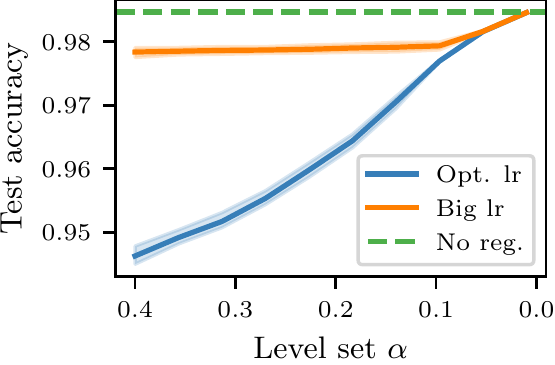}
    \caption{
        \emph{(Left)} Test accuracy \emph{(blue)} function of step size $\eta$ for CKN-MNIST. As the learning rate increases, the projection on the first component \emph{(orange)} increases, which makes the Hilbert norm \emph{(green)} decreases. This results in predictions closest in $\LL_\infty$ norm to the optimum.
        \\
        \emph{(Right)} Test accuracy function of level set $\alpha$ for CKN-MNIST. As we optimize more, the better performances of big step size \emph{(orange)} compare to optimal step size \emph{(blue)} vanish to reach the prediction of the optimum of $\LEmp$ \emph{(green, dashed)}.
        \\
        Shaded areas show standard deviation (train set and initialization) over 10 runs.
    }
    \label{fig:xps}
\end{figure}


\begin{figure}[p]
    \centering
    \subfigure[$s=30$, $\kappa_\TEmp=1.3 \cdot 10^4$][t]{
        \includegraphics[width=.48\textwidth]{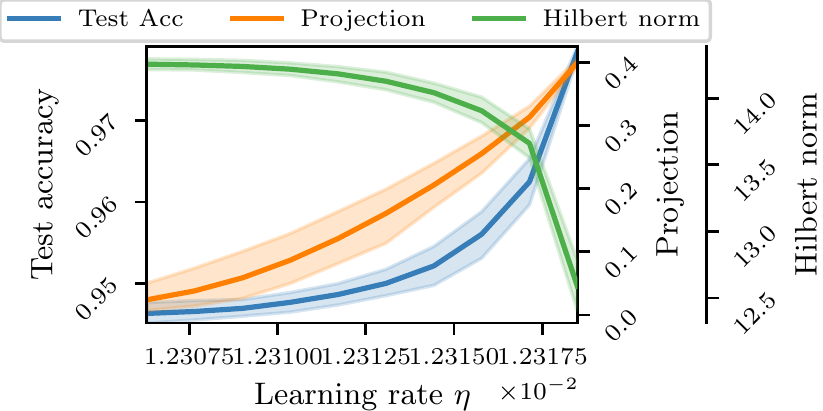}
    }
    \hfill
    \subfigure[$s=60$, $\kappa_\TEmp=2.8 \cdot 10^5$][t]{
        \includegraphics[width=.48\textwidth]{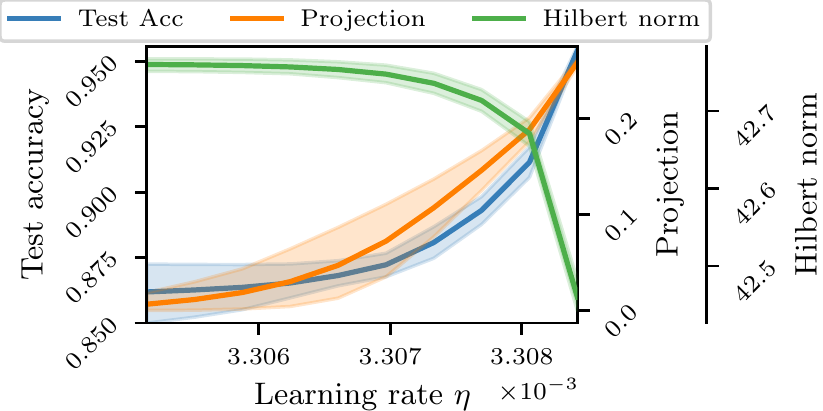}
    }
    \subfigure[$s=90$, $\kappa_\TEmp=1.4 \cdot 10^6$][t]{
        \includegraphics[width=.48\textwidth]{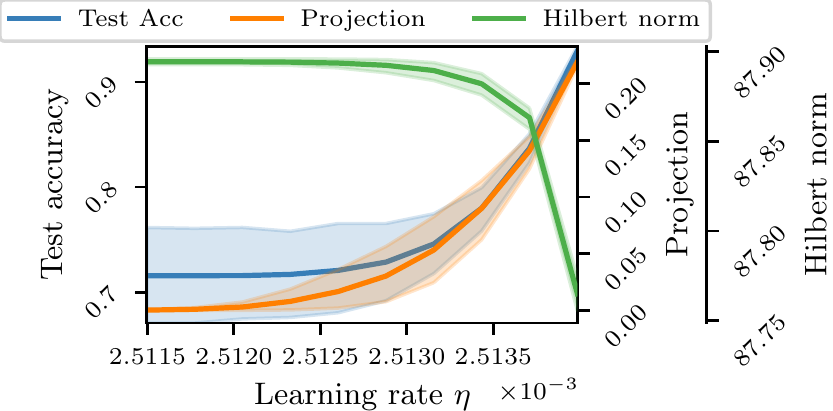}
    }
    \subfigure[$s=120$, $\kappa_\TEmp=4.0 \cdot 10^7$][t]{
        \includegraphics[width=.48\textwidth]{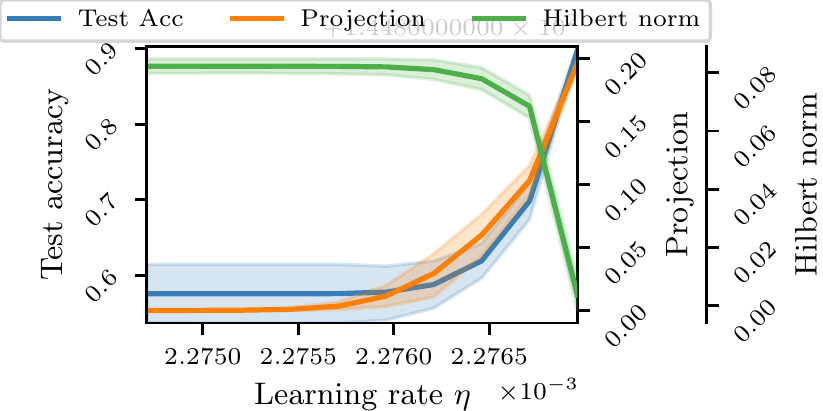}
    }

    \caption{Theorem~\ref{thm:main_body} predicts that the improvement in taking big rather than small step size increases with the condition number of $\TEmp$. We test this claim on our dataset: larger kernel scale $s$ makes the condition number of the kernel matrix $\kappa_\TEmp$ increases, which results in a larger margin between the test accuracy of $\theta_b$ and the one of $\theta_s$.}
    \label{fig:xps_multi_scale}
\end{figure}

\paragraph{Test accuracy function of the step size.}
We define some final level set $\alpha$. We plot three quantities function of the learning rate $\eta$. The \emph{projection on the first component} $\dotprod{\theta - \optEmp}{e_1}_\HH$ must be small for moderate learning rate and big for learning rate close to $2/\sigma_1$, as the attenuation satisfies $\absv{1 - \eta \sigma_1} \to 1$ when $\eta \to 2/\sigma_1$, see \cref{fig:attenuation_coefficient_function_lr}. This makes the \emph{Hilbert norm} $\LPop(\theta(\eta))$ decreases as $\eta$ increases, as predicted by Corollary~\ref{cor:numerical_bound_hilbert_norm}. This results in better test accuracy $1-B$ for big learning rate, following Lemma~\ref{lem:small_hilbert_norm_statistically_optimal}. This is summed up in \cref{fig:xps}, left. Note that the range for big step size is narrow. This is consistent with Assumption~\ref{asmpt:lr}, which requires a range equal to $2 \sigma_n / (\sigma_1 + \sigma_n) \approx 2\kappa_T^{-1}$. This is also in line with practices in deep learning where the best performance in generalization is often obtained by choosing the largest possible learning rate such that the model does not diverge.

\paragraph{Test accuracy function of optimization.}
Our main bound in Theorem~\ref{thm:main_body} relies on Assumption~\ref{asmpt:target}: in learning settings, it means that the optimization error $\alpha$ must be greater than a constant times the statistical error $\LPop(\optEmp)$ in order to observe improvements with big step sizes. This is shown in \cref{fig:xps}, right. Additional details on the plot is available in \cref{sec:appendix_experiments}.

\paragraph{Scale of the kernel.}
In \cref{fig:xps}, the scale of the Gaussian kernel is set to $s=30$, with which we obtained the best results on the test set. It is worth noting though that the scale of the kernel directly impacts the conditioning of the matrix. Notably, when $s \to 0$, the kernel matrix $K$ tends to the identity (hence $\kappa_\TEmp \to 1$) while when $s \to \infty$, $K$ tends to a rank-1 operator (hence $\kappa_\TEmp \to \infty$). A core result of Theorem~\ref{thm:main_body} is that we have \emph{bigger improvement for bigger $\kappa_\TEmp$}. We reproduce the experiment on the test accuracy with different scale in \cref{fig:xps_multi_scale}, and notice indeed bigger improvements for larger scale $s$, hence worse conditioning.

\section{Conclusion}

A large class of learning problems can be formulated as optimizing a function $\LEmp$ with gradient descent while we are interested in optimizing another function $\LPop$. Using simple quadratic forms to model this mismatch is a natural thing to do, while already providing lots of insight. We indeed show that the choice of large step sizes that may be suboptimal from an optimization point of view may provide better estimators than small/medium step sizes. In particular, we show that this phenomenon occurs in realistic classification tasks with low noise when learning with kernel methods.
In future work, we are planning to study other variants of gradient-based algorithms, which may be stochastic, or accelerated, and perhaps exploit the insight developed in our work to design new algorithms, which would focus on statistical efficiency, exploiting prior knowledge on the test loss, rather than on optimization of the training objective.

\acks{
    A.R. acknowleges support of the French government under management of Agence Nationale de la Recherche as part of the ``Investissements d'avenir'' program, reference ANR-19-P3IA-0001 (PRAIRIE 3IA Institute). A.R. acknowledges support of the European Research Council (grant REAL 947908).  J. Mairal was supported by the ERC grant number 714381 (SOLARIS project) and by ANR 3IA MIAI@Grenoble Alpes, (ANR19-P3IA-0003).
}

\bibliography{mybib}

\newpage
\appendix

\section*{Overview of the appendix}
\begin{itemize}
    \item In \cref{sec:appendix_proof_main_thm}, we prove Theorem~\ref{thm:main_body}.
    \item In \cref{sec:appendix_low_noise_loucas_results}, we give additional technical information on the low-noise classification task.
    \item We compare the bound in Theorem~\ref{thm:main_body} to existing results in the case of regression task with kernels, in \cref{sec:appendix_comparison_spectral_filters}. We highlight the difference between gradient descent with big step size and the estimators which write as a spectral filter.
    \item \Cref{sec:appendix_gradient_descent_in_practice} makes a simple remark, highlighting the difference between gradient descent on the train loss and gradient descent on the Hilbert norm in kernel regression.
    \item Finally, \cref{sec:appendix_experiments} gives additional information on the experiments.
\end{itemize}

\section{Proof of main result}
\label{sec:appendix_proof_main_thm}



\subsection{Definition and assumptions}

Recall from the main text Assumptions \ref{asmpt:notation_operator},
\ref{asmpt:lr},
\ref{asmpt:init_iota},
\ref{asmpt:target}.

We can rewrite precisely the gradient descent update in \cref{eq:gradient_descent_update} with the following definition.
\begin{definition}[Notations for the estimator]\label{def:notation_estimator}
    For some step size $\eta$, a number of steps $t$ and some $\theta_0 \in \HH_n$, we denote $\theta^{(\eta, t)}$ the estimator obtained through gradient descent from $\theta_0$. We denote $\p{\mu_{i}^{(\eta,t)}}_{1 \leq i \leq n}$ the decomposition of $\theta^{(\eta, t)} - \optEmp$ on $e_i$, \emph{i.e.}
    \begin{equation}
        \theta^{(\eta, t)} - \optEmp = \sum_{i = 1}^n \mu_i^{(\eta, t)} e_i.
    \end{equation}
    Denoting the initialization with $(\iota_i)_{1 \leq i \leq n}$,
    \begin{equation}
        \theta_0 - \optEmp = \sum_{i=1}^n \iota_i e_i,
    \end{equation}
    we have that
    \begin{equation}\label{eq:def_attenuation}
        \forall i \in \cb{1, \dots, n}, ~~ \mu_{i}^{(\eta, t)} = \iota_i A_i^{(\eta, t)} = \iota_i (1 - \eta \sigma_i)^t,
    \end{equation}
    where $A_i$ is the attenuation of the $i$-th eigencomponent at each step.

    To lighten the notations, we denote $\theta_s = \theta^{(\eta_s, t_s)}$ the estimator obtained with $t_s$ \emph{small} step size $\eta_s$, and $\theta_b = \theta^{(\eta_b, t_b)}$ the estimator obtained with $t_b$ \emph{big} step size $\eta_b$. Likewise, we use $\mu_i$ when it is clear from the context which of $\theta_s$ or $\theta_b$ we study.
\end{definition}

With these notations and Assumption~\ref{asmpt:lr} and \ref{asmpt:init_iota}, note that
\begin{equation*}
    \forall (\eta, t), ~~ \mu_1^{(\eta_b, t)} \neq 0, ~~ \mu_n^{(\eta_s, t)} \neq 0.
\end{equation*}

Also, we can now introduce the \emph{second biggest attenuation coefficients} in the following definition.

\begin{definition}[Second biggest attenuation coefficient.]
    We introduce the second biggest attenuation coefficients $\bar{A}_b, \bar{A}_s$ with
    \begin{equation}
        \bar{A}_b \eqdef \max\cb{A_2^{(\eta_b)}, A_n^{\eta_b}}, ~~ \bar{A}_s \eqdef A_{n-1}^{(\eta_s)}.
    \end{equation}
\end{definition}
Referring to \cref{fig:attenuation_coefficient_function_lr}, this implies
\begin{itemize}
    \item For $\eta_b$, we have that
          \begin{equation}
              A_1 > \max\cb{A_2, A_n} \eqdef \bar{A}_b \geq A_i, ~~ \forall i > 1.
          \end{equation}
          Thus, by tuning the number of steps $t$, we can make the ratio $(A_i / A_1)^t$ arbitrarily small for any $i > 1$.
    \item For $\eta_s$ we have that
          \begin{equation}
              A_n > A_{n-1} \eqdef \bar{A}_s \geq A_i, ~~ \forall i < n.
          \end{equation}
          Again, for a sufficiently large number of steps $t$, we can make $(A_i / A_n)^t$ arbitrarily small for any $i < n$.
\end{itemize}

\begin{definition}[Upper bound on $\alpha$.]\label{def:alpha_1}
    Given some small and big learning rate $\eta_s, \eta_b$, we introduce $\alpha_1$, a technical quantity depending on the spectrum of $\TEmp$ and $\TPop$ and the initialization:
    \begin{equation}
        \alpha_1 = \frac{1}{2} \sigma_n \iota_n^2 \exp \p{-
            \frac{
                \log \csb {\norm{\theta_0 - \optEmp}_\HH^2 \max \cb{16n\kappa_{\TPop}, 4\kappa_\TEmp} \cdot \max \cb{
                        \frac{1}{\iota_1^2}, \frac{1}{\iota_n^2}
                    }
                    + \frac{1}{1 - \eta_s \sigma_n}
                    + \frac{1}{\eta_b \sigma_1 - 1}
                }}{
                \min \cb{
                    \log \frac{1 - \eta_s\sigma_{n}}{1 - \eta_s\sigma_{n-1}},
                    \log \frac{A_1}{\bar{A}_b}
                }}}
    \end{equation}
\end{definition}

In particular, note that we have
\begin{align*}
    \alpha_1 & \leq \frac{1}{2}\sigma_1 \iota_1^2 \exp \p{- \frac{\log \csb{\frac{\norm{\theta_0 - \optEmp}_\HH^2}{\iota_1^2} 4n\kappa_{\TPop} + \frac{1}{\eta_b \sigma_1 - 1}}}{\log \frac{A_1}{\bar{A}_b}}},                                                          \\
    \alpha_1 & \leq \frac{1}{2} \sigma_n \iota_n^2 \exp \p{- \frac{\log \csb {\frac{\norm{\theta_0 - \optEmp}_\HH^2}{\iota_n^2} \max \cb{16n\kappa_{\TPop}, 4\kappa_\TEmp}+ \frac{1}{1 - \eta_s \sigma_n}}}{\log \frac{1 - \eta_s\sigma_{n}}{1 - \eta_s\sigma_{n-1}}}},
\end{align*}
which will prove useful for the derivation of Lemmas~\ref{lem:technical_bound_big_lr} and \ref{lem:technical_bound_small_lr}.

\subsection{An upper bound for the estimator with big learning rate}
In this subsection, we denote $\mu_i^{\eta_b, t_b}$ with $\mu_i$.

\begin{lemma}[Estimator with big step size.]\label{lem:technical_bound_big_lr}
    Set $\alpha > 0$ s.t.
    \begin{equation}
        \alpha < \alpha_1,
    \end{equation}
    where $\alpha_1$ is defined in Def.~\ref{def:alpha_1}. Define the quantity $\epsilon_b$ with
    \begin{equation}
        \epsilon_b^2 = \frac{\sum_{i > 1} \mu_i^2}{\mu_1^2}.
    \end{equation}
    Then, running gradient descent on $\LEmp$ with step size $\eta_b$ until the $(\alpha/2, \alpha)$ level sets are reached, \emph{i.e.}
    \begin{equation}
        \frac{1}{2} \alpha \leq \LEmp(\theta_b) \leq \alpha,
    \end{equation}
    ensures that
    \begin{equation}
        \epsilon_b^2 \leq \frac{1}{4 n \kappa_\TPop}, ~~ \andtext ~~ \frac{2}{5} \alpha \leq \frac{1}{2} \sigma_1 \mu_1^2 \leq \alpha.
    \end{equation}
    The resulting estimator is obtained with $t_b$ steps, with
    \begin{equation}\label{eq:lower_bound_t_big}
        t_b \geq \frac{1}{2} \frac{\log \frac{1}{2}\frac{\sigma_1 \iota_1^2}{\alpha}}{\log \frac{1}{\eta_b \sigma_1 - 1}} \geq \frac{1}{2}\frac{\log \frac{\norm{\theta_0 - \optEmp}_\HH^2}{\iota_1^2} 4n\kappa_{\TPop}}{\log \frac{A_1}{\bar{A}_b}}.
    \end{equation}
\end{lemma}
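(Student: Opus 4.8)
Recall from \cref{def:notation_estimator} that $\theta_b-\optEmp=\sum_{i=1}^n\mu_i e_i$ with $\mu_i=\iota_i(1-\eta_b\sigma_i)^{t_b}$, and that under \cref{asmpt:lr} the coefficient $A_1=\eta_b\sigma_1-1\in(0,1)$ is the \emph{largest} attenuation coefficient while every other one is $\le\bar{A}_b<A_1$ (the content of the discussion following \cref{fig:attenuation_coefficient_function_lr} and the definition of $\bar{A}_b$); hence the $e_1$-component of $\theta_b-\optEmp$ shrinks strictly slower than all the others, which is why $\theta_b$ ends up mostly aligned with $e_1$. The plan is to (i) lower bound the stopping time $t_b$ using only the stopping rule, obtaining the first inequality of \cref{eq:lower_bound_t_big} and, as a byproduct, $\tfrac12\sigma_1\mu_1^2\le\alpha$; (ii) upgrade this to the second inequality of \cref{eq:lower_bound_t_big} using the hypothesis $\alpha<\alpha_1$; (iii) turn that second inequality into $\epsilon_b^2\le 1/(4n\kappa_\TPop)$ by a geometric-series estimate; (iv) combine $\epsilon_b^2\le 1/4$ with $F(\theta_b)\ge\alpha/2$ to sandwich $\tfrac12\sigma_1\mu_1^2$ in $[\tfrac25\alpha,\alpha]$. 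The constant $\minLEmp$ in $F$ merely shifts the level $\alpha$ and plays no role, so I treat $F(\theta)=\tfrac12\sum_i\sigma_i\mu_i^2$.

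\textbf{Step (i) and the easy half of the $\mu_1$ bound.} Since $F(\theta_b)=\tfrac12\sum_i\sigma_i\mu_i^2\ge\tfrac12\sigma_1\mu_1^2=\tfrac12\sigma_1\iota_1^2A_1^{2t_b}$, the stopping condition $F(\theta_b)\le\alpha$ at once gives $\tfrac12\sigma_1\mu_1^2\le\alpha$ and $A_1^{2t_b}\le 2\alpha/(\sigma_1\iota_1^2)$; taking logarithms (with $\log(1/A_1)>0$) yields $t_b\ge\tfrac12\log\!\p{\tfrac12\sigma_1\iota_1^2/\alpha}\big/\log\!\p{1/(\eta_b\sigma_1-1)}$, i.e. the first inequality of \cref{eq:lower_bound_t_big}. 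Incidentally $F(\theta_0)\ge\tfrac12\sigma_1\iota_1^2>\alpha_1>\alpha$, so $t_b\ge1$ and the stopping time is well defined; by \cref{asmpt:init_iota} one has $\mu_1\ne0$, so $\epsilon_b$ is well defined.

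\textbf{Step (ii): the crux.} One substitutes the definition of $\alpha_1$ (\cref{def:alpha_1}, or the first of the two simplified upper bounds recorded just below it) into $\alpha<\alpha_1$ and checks by rearrangement that the first lower bound on $t_b$ from step (i) already implies the second, $t_b\ge\tfrac12\log\!\p{\tfrac{\norm{\theta_0-\optEmp}_\HH^2}{\iota_1^2}4n\kappa_\TPop}\big/\log\!\p{A_1/\bar{A}_b}$. The point is that $-\log\alpha_1$ was reverse-engineered to carry exactly the factor $\log(1/A_1)\big/\log(A_1/\bar{A}_b)$, so that "enough optimization depth" $\log(1/\alpha)$ is automatically converted into "enough alignment depth" $\log(n\kappa_\TPop)$; the additive terms inside the logarithm in $\alpha_1$ — notably $1/(\eta_b\sigma_1-1)$ — and the $\min$ in its denominator are there so that this comparison holds uniformly. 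I expect this bookkeeping to be the main obstacle: there is no conceptual leap, but the inequalities must be chained carefully because both $\log(1/A_1)$ and $\log(A_1/\bar{A}_b)$ can be arbitrarily small under \cref{asmpt:lr}.

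\textbf{Steps (iii)--(iv).} Given the second bound on $t_b$, step (iii) is a one-line estimate using $|1-\eta_b\sigma_i|\le\bar{A}_b$ for $i>1$:
\[
\epsilon_b^2=\frac{\sum_{i>1}\iota_i^2(1-\eta_b\sigma_i)^{2t_b}}{\iota_1^2A_1^{2t_b}}\le\frac{\norm{\theta_0-\optEmp}_\HH^2}{\iota_1^2}\p{\frac{\bar{A}_b}{A_1}}^{2t_b}\le\frac{1}{4n\kappa_\TPop},
\]
where the last step substitutes the lower bound on $t_b$ and simplifies the exponential; in particular $\epsilon_b^2\le 1/4$ since $n\kappa_\TPop\ge1$. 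Finally, for step (iv), $\sigma_i\le\sigma_1$ gives $F(\theta_b)=\tfrac12\sigma_1\mu_1^2+\tfrac12\sum_{i>1}\sigma_i\mu_i^2\le\tfrac12\sigma_1\mu_1^2(1+\epsilon_b^2)$, hence $\tfrac12\sigma_1\mu_1^2\ge F(\theta_b)/(1+\epsilon_b^2)\ge(\alpha/2)/(5/4)=\tfrac25\alpha$, which together with step (i) yields $\tfrac25\alpha\le\tfrac12\sigma_1\mu_1^2\le\alpha$, completing the argument.
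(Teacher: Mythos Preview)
Your proposal is correct and follows essentially the same approach as the paper's proof: both extract $t_b\ge t_2$ from $F(\theta_b)\le\alpha$ via $\tfrac12\sigma_1\mu_1^2\le F(\theta_b)$, invoke $\alpha<\alpha_1$ to conclude $t_2\ge t_1$ (your step~(ii), the paper's ``Feasibility'' paragraph), deduce $\epsilon_b^2\le 1/(4n\kappa_\TPop)$ from $t_b\ge t_1$ by the same geometric estimate, and finally sandwich $\tfrac12\sigma_1\mu_1^2$ using $F(\theta_b)\le\tfrac12\sigma_1\mu_1^2(1+\epsilon_b^2)$ and $F(\theta_b)\ge\alpha/2$. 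The only difference is organizational --- the paper first presents the $\epsilon_b$ and $\mu_1$ bounds as conditional implications and then checks compatibility, whereas you proceed linearly --- but the ingredients and the logical dependencies are identical.
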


\begin{proof}
    ~\paragraph{Bound on $\epsilon_b$.}
    By using the definition of $\epsilon_b$ and the expression of the $(\mu_i)_{1 \leq i \leq n}$ given in Def.~\ref{def:notation_estimator}, we have
    \begin{equation}
        \epsilon_b^2 = \frac{\sum_{i > 1} \mu_i^2}{\mu_1^2} = \frac{\sum_{i > 1} \iota_i^2 A_i^{2t}}{\iota_1^2 A_1^{2t}} \leq \frac{\norm{\theta_0 - \optEmp}_\HH^2}{\iota_1^2} \p{\frac{\bar{A}_b}{A_1}}^{2t}.
    \end{equation}
    Thanks to the proper choice of $\eta_b$ given in Asmpt.~\ref{asmpt:lr}, we have that ${\bar{A}_b}/{A_1} < 1$, so that
    \begin{equation}\label{eq:proof_big_lr_delta}
        \forall \delta > 0, ~~ t \geq t_1 \eqdef \frac{1}{2}\frac{\log \frac{\norm{\theta_0 - \optEmp}_\HH^2}{\delta^2 \iota_1^2}}{\log \frac{A_1}{\bar{A}_b}} \implies \epsilon_b^2 \leq \delta^2.
    \end{equation}

    \paragraph{Bound on $\mu_1$.}
    Now, recall that the optimization error reads
    \begin{equation*}
        \LEmp(\theta_b) = \frac{1}{2} \sum_{i=1}^n \sigma_i \mu_i^2 = \frac{1}{2} \sigma_1 \mu_1^2 \p{1 + \frac{\sum_{i > 1} \sigma_i \mu_i^2}{\sigma_1 \mu_1^2}}
    \end{equation*}
    as we assumed that $\iota_1 \neq 0$ in Asmpt.~\ref{asmpt:init_iota}. Thus, we can bound the loss in two ways. First, by definition
    \begin{equation}\label{eq:proof_big_lr_1}
        \frac{1}{2} \sigma_1 \mu_1^2 \leq \LEmp(\theta_b) \leq \frac{1}{2} \sigma_1 \mu_1^2 (1 + \epsilon_b^2)
    \end{equation}
    and second, we assumed the estimator to belong to the $(\alpha/2, \alpha)$ level set of $\LEmp$, \emph{i.e.}
    \begin{equation}\label{eq:proof_big_lr_2}
        \frac{\alpha}{2} \leq \LEmp(\theta_b) \leq \alpha.
    \end{equation}
    Combining \cref{eq:proof_big_lr_1} with \cref{eq:proof_big_lr_2}, we have that
    \begin{equation}
        \frac{\alpha}{2 (1 + \epsilon_b^2)} \leq \frac{1}{2} \sigma_1 \mu_1^2 \leq \alpha.
    \end{equation}

    \paragraph{Feasability.}
    Finally, we consider if $\epsilon_b^2 \leq \frac{1}{4 n \kappa_\TPop}$ and $\theta_b$ being in the $(\alpha/2, \alpha)$ level sets can occur at the same time.

    First of all, we set the value of $\delta^2$ to $\frac{1}{4 n \kappa_\TPop}$ in \cref{eq:proof_big_lr_1}. We get
    \begin{equation}
        t_1 \eqdef \frac{1}{2}\frac{\log \frac{\norm{\theta_0 - \optEmp}_\HH^2}{\iota_1^2} 4n\kappa_{\TPop}}{\log \frac{A_1}{\bar{A}_b}}.
    \end{equation}

    Then, we derive necessary conditions for having \cref{eq:proof_big_lr_2}. Those conditions are derived through the bounds established in \cref{eq:proof_big_lr_1}. For the lower bound, assuming $t \geq t_1$, so that, in particular, we have $\epsilon_b^2 \leq 1/4$,
    \begin{align*}
        \frac{\alpha}{2} \leq \LEmp(\theta_b)
         & \implies \frac{\alpha}{2} \leq \frac{1}{2} \sigma_1 \mu_1^2 (1 + \epsilon_b^2)                                                       \\
         & \implies \frac{4\alpha}{5\sigma_1} \leq \mu_1^2 = \iota_1^2 (\eta_b\sigma_1 - 1)^{2t}                                                \\
         & \implies t \leq t_3 \eqdef \frac{1}{2} \frac{\log \frac{5}{4}\frac{\sigma_1 \iota_1^2}{\alpha}}{\log \frac{1}{\eta_b \sigma_1 - 1}}.
    \end{align*}
    Likewise, for the upper bound we have,
    \begin{align*}
        \LEmp(\theta_b)  \leq \alpha
         & \implies \frac{1}{2} \sigma_1 \mu_1^2 \leq \alpha                                                                                    \\
         & \implies t \geq t_2 \eqdef \frac{1}{2} \frac{\log \frac{1}{2}\frac{\sigma_1 \iota_1^2}{\alpha}}{\log \frac{1}{\eta_b \sigma_1 - 1}}.
    \end{align*}

    Summing up, we have
    \begin{itemize}
        \item $t \geq t_1$ implies that the bound on $\epsilon_b$ in \cref{eq:proof_big_lr_delta} holds.
        \item Ensuring that the level set condition in \cref{eq:proof_big_lr_2} holds are met implies that $t \in (t_2, t_3)$.
    \end{itemize}
    Thus, we need to ensure that $t_2 > t_1$ (and that $t_3 - t_2 > 1$; we assume this, as we can look at smaller level sets if necessary). To do this, we use that $t_2$ is an decreasing function of $\alpha$. Thus, $t_2 > t_1$ as soon as
    \begin{equation*}
        \alpha \leq \frac{1}{2}\sigma_1 \iota_1^2 \exp \p{- \frac{\log \csb{\frac{\norm{\theta_0 - \optEmp}_\HH^2}{\iota_1^2} 4n\kappa_{\TPop} + \frac{1}{\eta_b \sigma_1 - 1}}}{\log \frac{A_1}{\bar{A}_b}}},
    \end{equation*}
    which is exactly the purpose of the technical assumption $\alpha \leq \alpha_1$, with $\alpha_1$ defined in Def.~\ref{def:alpha_1}.
\end{proof}

\subsection{A lower bound for the estimator with small learning rate}
We now derive a similar result for $\theta_s$. To lighten the notations, we use $\mu_i^{(\eta_s, t_s)} = \mu_i$.

\begin{lemma}[Estimator with small step size.]\label{lem:technical_bound_small_lr}
    Set $\alpha > 0$ s.t.
    \begin{equation}
        \alpha < \alpha_1,
    \end{equation}
    where $\alpha_1$ is defined in Def.~\ref{def:alpha_1}. Define the quantity $\epsilon_s$ with
    \begin{equation}
        \epsilon_s^2 = \frac{\sum_{i < n} \mu_i^2}{\mu_n^2}.
    \end{equation}
    Then, running gradient descent on $\LEmp$ with step size $\eta_s$ until the $(\alpha/2, \alpha)$ level sets are reached, \emph{i.e.}
    \begin{equation}
        \frac{1}{2} \alpha \leq \LEmp(\theta_b) \leq \alpha,
    \end{equation}
    ensures that
    \begin{equation}
        \epsilon_s^2 \leq 1/(16n\kappa_{\TPop}), ~~ \andtext ~~ \frac{2}{5} \alpha \leq \frac{1}{2} \sigma_n \mu_n^2 \leq \alpha.
    \end{equation}
    The resulting estimator is obtained with $t_s$ steps, with
    \begin{equation}\label{eq:lower_bound_t_small}
        t_s \geq \frac{1}{2} \frac{\log \frac{1}{2}\frac{\sigma_n \iota_n^2}{\alpha}}{\log \frac{1}{1 - \eta_s \sigma_n}} \geq \frac{1}{2} \frac{\log \csb {\frac{\norm{\theta_0 - \optEmp}_\HH^2}{\iota_n^2} \max \cb{16n\kappa_{\TPop}, 4\kappa_\TEmp}}}{\log \frac{1 - \eta_s\sigma_{n}}{1 - \eta_s\sigma_{n-1}}}.
    \end{equation}
\end{lemma}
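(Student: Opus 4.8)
The plan is to mirror the structure of the proof of \cref{lem:technical_bound_big_lr}, since \cref{lem:technical_bound_small_lr} is the exact analogue for the small-step-size estimator, with the role of the top eigendirection $e_1$ replaced by the bottom eigendirection $e_n$. The only genuinely new ingredient, compared to the big-learning-rate case, is the appearance of the extra factor $4\kappa_\TEmp$ inside the $\max$ in the number of steps $t_s$, which will come from needing $\theta_s$ to be \emph{doubly} well-aligned: both $\epsilon_s$ small enough for the risk comparison ($\le 1/(16n\kappa_\TPop)$), and $\epsilon_s$ small enough that the loss along $e_n$ dominates the strong-convexity estimate (hence the $\kappa_\TEmp$ term).

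First I would write the estimator in the eigenbasis via \cref{eq:def_attenuation}: $\mu_i = \iota_i(1-\eta_s\sigma_i)^{t_s}$, and use \cref{asmpt:lr} together with \cref{fig:attenuation_coefficient_function_lr} to note that for $\eta_s < 2/(\sigma_1+\sigma_n)$ the attenuation $|1-\eta_s\sigma_n|$ is the largest among all $i$, with $\bar A_s = 1-\eta_s\sigma_{n-1}$ the second largest. Then
\begin{equation*}
    \epsilon_s^2 = \frac{\sum_{i<n}\iota_i^2 A_i^{2t}}{\iota_n^2 A_n^{2t}} \le \frac{\norm{\theta_0-\optEmp}_\HH^2}{\iota_n^2}\p{\frac{\bar A_s}{A_n}}^{2t},
\end{equation*}
so that $t \ge \tfrac12\log\!\big(\norm{\theta_0-\optEmp}_\HH^2/(\delta^2\iota_n^2)\big)\big/\log\tfrac{A_n}{\bar A_s}$ forces $\epsilon_s^2\le\delta^2$. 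Setting $\delta^2 = 1/(16n\kappa_\TPop)$ (to later control the risk) and also requiring $\delta^2$ small enough to absorb the $4\kappa_\TEmp$ factor gives the $\max\{16n\kappa_\TPop,4\kappa_\TEmp\}$ in the displayed lower bound on $t_s$. Next, exactly as in the big case, I would write $\LEmp(\theta_s) = \tfrac12\sigma_n\mu_n^2\big(1 + \tfrac{\sum_{i<n}\sigma_i\mu_i^2}{\sigma_n\mu_n^2}\big)$, observe $\tfrac{\sum_{i<n}\sigma_i\mu_i^2}{\sigma_n\mu_n^2} \le \tfrac{\sigma_1}{\sigma_n}\epsilon_s^2 = \kappa_\TEmp\epsilon_s^2$, which is $\le 1/4$ precisely because $\epsilon_s^2 \le 1/(4\kappa_\TEmp)$ (this is where the $4\kappa_\TEmp$ term pays off), giving $\tfrac12\sigma_n\mu_n^2 \le \LEmp(\theta_s) \le \tfrac54\cdot\tfrac12\sigma_n\mu_n^2$. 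Combined with the level-set hypothesis $\alpha/2\le\LEmp(\theta_s)\le\alpha$ this yields $\tfrac25\alpha \le \tfrac12\sigma_n\mu_n^2\le\alpha$.

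Finally comes the feasibility argument, which I expect to be the main obstacle (as it was in \cref{lem:technical_bound_big_lr}): one must check that ``$\epsilon_s$ as small as required'' and ``$\theta_s$ in the $(\alpha/2,\alpha)$-level set'' can hold simultaneously. The level-set condition $\tfrac12\sigma_n\mu_n^2 \le \alpha$ forces, via $\mu_n^2 = \iota_n^2(1-\eta_s\sigma_n)^{2t}$, a \emph{lower} bound $t\ge t_2 := \tfrac12\log(\tfrac12\sigma_n\iota_n^2/\alpha)\big/\log\tfrac{1}{1-\eta_s\sigma_n}$, while $\LEmp(\theta_s)\ge\alpha/2$ forces an \emph{upper} bound $t\le t_3$; and the alignment requires $t\ge t_1 := \tfrac12\log\!\big(\norm{\theta_0-\optEmp}_\HH^2\max\{16n\kappa_\TPop,4\kappa_\TEmp\}/\iota_n^2\big)\big/\log\tfrac{A_n}{\bar A_s}$. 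Since $t_2$ is decreasing in $\alpha$, the constraints are compatible as soon as $t_2 \ge t_1$, i.e. as soon as $\alpha$ is below a threshold of exactly the form of the second displayed upper bound on $\alpha_1$ after \cref{def:alpha_1}; invoking \cref{asmpt:target}'s hypothesis $\alpha\le\alpha_1$ closes this. The interval non-emptiness $t_3 - t_2 > 1$ is again handled by the remark that one may always pass to a smaller level set. The delicate bookkeeping is making sure the single threshold $\alpha_1$ from \cref{def:alpha_1} simultaneously dominates the thresholds needed here and in \cref{lem:technical_bound_big_lr}, which is why $\alpha_1$ is defined with the $\min$ over both $\log\tfrac{A_1}{\bar A_b}$ and $\log\tfrac{1-\eta_s\sigma_n}{1-\eta_s\sigma_{n-1}}$ in the denominator and the $\max\{16n\kappa_\TPop,4\kappa_\TEmp\}$ in the numerator.
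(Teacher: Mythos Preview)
Your proposal is correct and follows essentially the same approach as the paper's proof: the same bound on $\epsilon_s$ via $(\bar A_s/A_n)^{2t}$, the same two-sided control of $\tfrac12\sigma_n\mu_n^2$ using $\LEmp(\theta_s)\le\tfrac12\sigma_n\mu_n^2(1+\kappa_\TEmp\epsilon_s^2)$ together with the level-set hypothesis, and the same feasibility argument via $t_1,t_2,t_3$ with $\alpha\le\alpha_1$ ensuring $t_2\ge t_1$. In particular you correctly identify that the extra $4\kappa_\TEmp$ factor (absent in \cref{lem:technical_bound_big_lr}) is needed precisely to force $\kappa_\TEmp\epsilon_s^2\le 1/4$ so that the lower bound $\tfrac25\alpha\le\tfrac12\sigma_n\mu_n^2$ goes through.
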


\begin{proof}
    The proof is very close to the one of Lemma~\ref{lem:technical_bound_big_lr}. We only give the main results.

    \paragraph{Bound on $\epsilon_s$.}
    This quantity can be written
    \begin{equation*}
        \epsilon_s^2 = \frac{\sum_{i < n} \mu_i^2}{\mu_n^2} = \frac{\sum_{i < n} \iota_i^2 A_i^{2t}}{\iota_n^2 A_n^{2t}} \leq \frac{\norm{\theta_0 - \optEmp}_\HH^2}{\iota_n^2} \p{\frac{\bar{A}_s}{A_n}}^{2t},
    \end{equation*}
    with $1 - \eta_s \sigma_{n-1} = \bar{A}_s > A_n = 1 - \eta_s \sigma_n$ following Asmpt.\ref{asmpt:lr}. Thus,
    \begin{equation*}
        \forall \delta > 0, ~~ t \geq t_1 \eqdef \frac{1}{2} \frac{\log \frac{\norm{\theta_0 - \optEmp}_\HH^2}{\delta^2 \iota_n^2}}{\log \frac{1 - \eta_s\sigma_{n}}{1 - \eta_s\sigma_{n-1}}} \implies \epsilon_s^2 \leq \delta^2.
    \end{equation*}

    \paragraph{Bound on $\mu_n$}
    Again, following the same paragraph in Lemma~\ref{lem:technical_bound_big_lr}, we have
    \begin{equation*}
        \LEmp(\theta_s) = \frac{1}{2} \sum_{i=1}^n \sigma_i \mu_i^2 = \frac{1}{2} \sigma_n \mu_n^2 \p{1 + \frac{\sum_{i > 1} \sigma_i \mu_i^2}{\sigma_n \mu_n^2}}
    \end{equation*}
    as we assumed that $\iota_n \neq 0$ in Asmpt.~\ref{asmpt:init_iota}. We bound the loss in two ways. First, by definition
    \begin{equation}\label{eq:proof_small_lr_1}
        \frac{1}{2} \sigma_n \mu_n^2 \leq \LEmp(\theta_s) \leq \frac{1}{2} \sigma_n \mu_n^2 (1 + \kappa_\TEmp \epsilon_s^2)
    \end{equation}
    and second, we assumed the estimator to belong to the $(\alpha/2, \alpha)$ level set of $\LEmp$, \emph{i.e.}
    \begin{equation}\label{eq:proof_small_lr_2}
        \frac{\alpha}{2} \leq \LEmp(\theta_s) \leq \alpha.
    \end{equation}
    Combining \cref{eq:proof_small_lr_1} with \cref{eq:proof_small_lr_2}, we have that
    \begin{equation}
        \frac{\alpha}{2 (1 + \kappa_\TEmp \epsilon_s^2)} \leq \frac{1}{2} \sigma_n \mu_n^2 \leq \alpha.
    \end{equation}

    \paragraph{Feasability.}
    The discussion is the same, but with the values
    \begin{align*}
        t_1 & = \frac{1}{2} \frac{\log \csb {\frac{\norm{\theta_0 - \optEmp}_\HH^2}{\iota_n^2} \max \cb{16n\kappa_{\TPop}, 4\kappa_\TEmp}}}{\log \frac{1 - \eta_s\sigma_{n}}{1 - \eta_s\sigma_{n-1}}} \\
        t_2 & = \frac{1}{2} \frac{\log \frac{1}{2}\frac{\sigma_n \iota_n^2}{\alpha}}{\log \frac{1}{1 - \eta_s \sigma_n}}                                                                              \\
        t_3 & = \frac{1}{2} \frac{\log \frac{5}{4}\frac{\sigma_n \iota_n^2}{\alpha}}{\log \frac{1}{1 - \eta_s \sigma_n}}.
    \end{align*}
    Note that the addition of $\kappa_\TEmp$ in the definition of $t_1$ is simply to ensure that
    \begin{equation*}
        \forall t \geq t_1, ~~ \epsilon_s^2 \leq \frac{1}{4 \kappa_\TEmp}, ~~ \text{so that} ~~ \frac{2}{5}\alpha \leq \frac{\alpha}{2 (1 + \kappa_\TEmp \epsilon_s^2)} \leq \frac{1}{2} \sigma_n \mu_n^2.
    \end{equation*}
    To ensure the feasibility of both bounds at the same time, we need to ensure $t_2 > t_1$. A sufficient condition for this is having
    \begin{equation*}
        \alpha \leq
        \frac{1}{2} \sigma_n \iota_n^2 \exp \p{- \frac{\log \csb {\frac{\norm{\theta_0 - \optEmp}_\HH^2}{\iota_n^2} \max \cb{16n\kappa_{\TPop}, 4\kappa_\TEmp}+ \frac{1}{1 - \eta_s \sigma_n}}}{\log \frac{1 - \eta_s\sigma_{n}}{1 - \eta_s\sigma_{n-1}}}}
    \end{equation*}
    which is, again, covered with the assumption $\alpha \leq \alpha_1$ defined in Def.~\ref{def:alpha_1}.
\end{proof}

\subsection{Plugging the two together}
\begin{theorem}
    Assume that the optimization error satisfy
    \begin{equation}
        \alpha \leq \alpha_1,
    \end{equation}
    where $\alpha_1$ is defined in Definition~\ref{def:alpha_1}. Assume that Assumption~\ref{asmpt:notation_operator} on the operators $\TEmp$ and $\TPop$ hold, that the condition on the learning rates $\eta_b, \eta_s$ of Assumption~\ref{asmpt:lr} hold, as the condition on the initialization in Assumption~\ref{asmpt:init_iota}.

    Assume that gradient descent is performed until the $(\alpha/2, \alpha)$ level sets are reached. Then we have that
    \begin{equation}
        \LPop(\theta_b) \leq c_\alpha \LPop(\theta_s), ~~ \text{with} ~~ c_\alpha = \csb{\frac{1 + 2\frac{\sigma_1}{\varsigma_1} \frac{\LPop(\optEmp)}{\alpha}}{\p{1 - \sqrt{18 \frac{\sigma_n}{\varsigma_n} \frac{\LPop(\optEmp)}{\alpha}}}_+}}.
    \end{equation}
    Further assume that Assumption~\ref{asmpt:target} holds. Then $c_\alpha \leq 2$ and the bound becomes
    \begin{equation}
        \LPop(\theta_b) \leq 34 \frac{\kappa_{\TPop}}{\kappa_\TEmp} \LPop(\theta_s).
    \end{equation}
\end{theorem}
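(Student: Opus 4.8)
The plan is to glue together the two one-sided estimates that Lemmas~\ref{lem:technical_bound_big_lr} and~\ref{lem:technical_bound_small_lr} already provide, and then to re-measure everything in the $\TPop$-norm rather than the $\TEmp$-norm. Throughout I would work in the eigenbasis $(e_i)_{1\le i\le n}$ of $\TEmp$ restricted to $\HH_n$ (which contains $\optEmp$, $\optPop$ and all iterates), decompose $\theta_b-\optEmp=\sum_i\mu_i^{(b)}e_i$ and $\theta_s-\optEmp=\sum_i\mu_i^{(s)}e_i$, and use the normalization $\sigma_1=\varsigma_1=1$, so that $\sigma_n=1/\kappa_\TEmp$ and $\varsigma_n=1/\kappa_\TPop$. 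By Lemma~\ref{lem:technical_bound_big_lr}, $\epsilon_b^2=\sum_{i>1}(\mu_i^{(b)})^2/(\mu_1^{(b)})^2\le 1/(4n\kappa_\TPop)$ and $\tfrac25\alpha\le\tfrac12\sigma_1(\mu_1^{(b)})^2\le\alpha$; by Lemma~\ref{lem:technical_bound_small_lr}, $\epsilon_s^2\le 1/(16n\kappa_\TPop)$ and $\tfrac25\alpha\le\tfrac12\sigma_n(\mu_n^{(s)})^2\le\alpha$. In words, $\theta_b-\optEmp$ is almost colinear with $e_1$ and $\theta_s-\optEmp$ with $e_n$, while both land on a level set comparable to $\alpha$; this is the whole content secured by those lemmas, and the rest is bookkeeping.

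\emph{Step 1 (upper bound for $\theta_b$).} I would split $\theta_b-\optPop=(\theta_b-\optEmp)+(\optEmp-\optPop)$ and apply the triangle inequality for $\norm{\cdot}_\TPop$, using $\tfrac12\norm{\optEmp-\optPop}_\TPop^2=\LPop(\optEmp)$, to get $\sqrt{2\LPop(\theta_b)}\le\norm{\theta_b-\optEmp}_\TPop+\sqrt{2\LPop(\optEmp)}$. Since $\TPop\preceq\varsigma_1\II$ on $\HH_n$, $\norm{\theta_b-\optEmp}_\TPop^2\le\varsigma_1\norm{\theta_b-\optEmp}_\HH^2=\varsigma_1(\mu_1^{(b)})^2(1+\epsilon_b^2)\le 2\alpha\tfrac{\varsigma_1}{\sigma_1}(1+\epsilon_b^2)$. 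Squaring and factoring out $\tfrac{\alpha\varsigma_1}{\sigma_1}$ then bounds $\LPop(\theta_b)$ by $\tfrac{\alpha\varsigma_1}{\sigma_1}(1+\epsilon_b^2)$ times a factor depending only on $\tfrac{\sigma_1}{\varsigma_1}\tfrac{\LPop(\optEmp)}{\alpha}$, which under Assumption~\ref{asmpt:target} reduces to the numerator of $c_\alpha$ up to the harmless $1+\epsilon_b^2$.

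\emph{Step 2 (lower bound for $\theta_s$, and the ratio).} Symmetrically, the reverse triangle inequality gives $\sqrt{2\LPop(\theta_s)}\ge\norm{\theta_s-\optEmp}_\TPop-\sqrt{2\LPop(\optEmp)}$, and $\TPop\succeq\varsigma_n\II$ on $\HH_n$ yields $\norm{\theta_s-\optEmp}_\TPop^2\ge\varsigma_n\norm{\theta_s-\optEmp}_\HH^2\ge\varsigma_n(\mu_n^{(s)})^2\ge\tfrac{4\alpha}{5}\tfrac{\varsigma_n}{\sigma_n}$, hence $\LPop(\theta_s)\ge\tfrac{2\alpha\varsigma_n}{5\sigma_n}\left(1-\sqrt{\tfrac{5\sigma_n}{2\varsigma_n}\tfrac{\LPop(\optEmp)}{\alpha}}\right)_+^2$. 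The positive part is exactly what forces the $(\cdot)_+$ in $c_\alpha$, and this is the single place where the \emph{lower} bound $\LPop(\optEmp)/\alpha\le\kappa_\TEmp/(72\kappa_\TPop)$ of Assumption~\ref{asmpt:target} is needed, ensuring the estimate is not vacuous. Dividing the Step-1 bound by the Step-2 bound, the $\alpha$'s cancel and the spectral ratio collapses to $\tfrac{\varsigma_1\sigma_n}{\sigma_1\varsigma_n}=\tfrac{\kappa_\TPop}{\kappa_\TEmp}$ (using $\sigma_1=\varsigma_1=1$), giving $\LPop(\theta_b)\le c_\alpha\LPop(\theta_s)$ with $c_\alpha$ of the stated shape.

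\emph{Step 3 (the constant $34$).} Under Assumption~\ref{asmpt:target}: $\epsilon_b^2,\epsilon_s^2\le\tfrac14$ so $1+\epsilon_b^2\le\tfrac54$; $\LPop(\optEmp)/\alpha\le\tfrac14$ bounds the numerator of $c_\alpha$ by $1+2\tfrac{\sigma_1}{\varsigma_1}\cdot\tfrac14=\tfrac32$; and $18\tfrac{\sigma_n}{\varsigma_n}\tfrac{\LPop(\optEmp)}{\alpha}=18\tfrac{\kappa_\TPop}{\kappa_\TEmp}\tfrac{\LPop(\optEmp)}{\alpha}\le\tfrac14$ by the second part of \cref{eq:asmpt_target_bound}, so the denominator is $\ge\tfrac12$ and $c_\alpha\le 2$; multiplying out the constants left over from Steps~1--2 then yields the factor $34\,\kappa_\TPop/\kappa_\TEmp$. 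I expect the only genuinely delicate point to be in Steps~1--2: keeping the chain of constants tight enough that the final factor lands at $34$ rather than something substantially larger, and making sure each invocation of Lemmas~\ref{lem:technical_bound_big_lr}--\ref{lem:technical_bound_small_lr} uses the \emph{joint} guarantee ``directional bias \emph{and} on the $(\alpha/2,\alpha)$ level set'', which is precisely what the feasibility arguments behind the hypothesis $\alpha\le\alpha_1$ buy.
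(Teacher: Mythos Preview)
Your proposal is correct and follows the same three-step skeleton as the paper: invoke Lemmas~\ref{lem:technical_bound_big_lr}--\ref{lem:technical_bound_small_lr} for the directional bias and the $(\alpha/2,\alpha)$ level-set control, turn these into an upper bound on $\LPop(\theta_b)$ and a lower bound on $\LPop(\theta_s)$ via the triangle/reverse-triangle inequality around $\optEmp$, then divide.

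The one genuine technical difference is how you pass from the $\HH$-norm information to the $\TPop$-norm. The paper first isolates the dominant eigencomponent ($\mu_1 e_1$ for $\theta_b$, $\mu_n e_n$ for $\theta_s$), bounds the remainder separately in $\TPop$-norm using $\kappa_\TPop$ and Cauchy--Schwarz (this is where the factors $n\kappa_\TPop\epsilon_b^2$ and $\sqrt{n\kappa_\TPop}\,\epsilon_s$ appear, and why the lemmas are stated with $\epsilon^2\le 1/(4n\kappa_\TPop)$ and $1/(16n\kappa_\TPop)$), and only then plugs in the level-set bound. You instead apply the crude sandwich $\varsigma_n\II\preceq\TPop\preceq\varsigma_1\II$ directly to the whole vector $\theta-\optEmp$. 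This is shorter and actually gives tighter constants: your square-root term carries $5/2$ instead of $18$, and your lower bound prefactor is $2/5$ instead of $3/10$. A side-effect worth noting is that your Step~2 never uses the $\epsilon_s$-bound from Lemma~\ref{lem:technical_bound_small_lr} at all, only the conclusion $\tfrac12\sigma_n\mu_n^2\ge\tfrac25\alpha$; the paper's route needs $\epsilon_s$ explicitly to kill the off-$e_n$ piece. The price you pay is that you will not reproduce the \emph{exact} $c_\alpha$ with the constant $18$ displayed in the statement, but since your $c_\alpha$ is smaller, the stated inequality and the final $34\,\kappa_\TPop/\kappa_\TEmp$ bound follow a fortiori.
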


\begin{proof}
    ~\paragraph{Upper bound for big learning rate.}
    We first proceed to bounding $\LPop(\theta_b)$. In this paragraph, we use $\mu_i = \mu_i^{\eta_b, t_b}$. We have
    \begin{align*}
        \LPop(\theta_b)
         & = \frac{1}{2} \norm{\theta_b - \optPop}_\TPop^2
         &                                                                                                                                  & \text{(Definition)}                               \\
         & \leq \norm{\theta_b - \optEmp}_\TPop^2 + \norm{\optEmp - \optPop}_\TPop^2
         &                                                                                                                                  & \text{(Triangular inequality)}                    \\
         & = \norm{\theta_b - \optEmp}_\TPop^2 + 2 \LPop(\optEmp)                                                                                                                               \\
         & \leq \norm{\mu_1 e_1}_\TPop^2 \p{1 + \frac{\norm{\sum_{i > 1} \mu_i e_i}_\TPop}{\norm{\mu_1 e_1}_\TPop}}^2 + 2 \LPop(\optEmp)
         &                                                                                                                                  & \text{(Triangular inequality)}                    \\
         & \leq 2\norm{\mu_1 e_1}_\TPop^2 \p{1 + \frac{\norm{\sum_{i > 1} \mu_i e_i}_\TPop^2}{\norm{\mu_1 e_1}_\TPop^2}} + 2 \LPop(\optEmp)
         &                                                                                                                                  & \text{($(a+b)^2 \leq 2 (a^2 + b^2)$)}             \\
         & \leq 2\norm{\mu_1 e_1}_\TPop^2 \p{1 + \kappa_{\TPop} \frac{\absv{\sum_{i > 1} \mu_i}^2}{\absv{\mu_1}^2}} + 2 \LPop(\optEmp)
         &                                                                                                                                  & \text{(Def. of $\kappa_\TPop$, $\norm{e_i} = 1$)} \\
         & \leq 2\norm{\mu_1 e_1}_\TPop^2 \p{1 + \kappa_{\TPop} n \epsilon_s^2} + 2 \LPop(\optEmp).
         &                                                                                                                                  & \text{(Cauchy-Schwartz)}
    \end{align*}
    Now, we use the results of Lemma~\ref{lem:technical_bound_big_lr}. Firstly, we have $\kappa_{\TPop} n \epsilon_{b}^2 \leq 1/4$. Secondly, we can use $\norm{e_1}_\TPop^2 \leq \varsigma_1$. The previous inequality then turns to
    \begin{equation*}
        \LPop(\theta_b)
        \leq \frac{5\varsigma_1}{2} \mu_1^2 + 2 \LPop(\optEmp).
    \end{equation*}
    Finally, we use the fact that $\frac{1}{2}\sigma_1 \mu_1^2 \leq \alpha$ to conclude with
    \begin{equation}\label{eq:thm_appendix_final_big}
        \LPop(\theta_b)
        \leq 5 \alpha \frac{\varsigma_1}{\sigma_1} + 2 \LPop(\optEmp).
    \end{equation}

    \paragraph{Lower bound for small learning rate.}
    We now turn to bounding $\LPop(\theta_s)$. Here, we use $\mu_i = \mu_i^{\eta_s, t_s}$. We have
    \begin{align*}
        \LPop(\theta_s)
         & = \frac{1}{2} \norm{\theta_s - \optPop}_\TPop^2
         &                                                                                                                                                                               & \text{(Definition)}                                    \\
         & \geq \frac{1}{2} \p{\norm{\theta_s - \optEmp}_\TPop - \norm{\optEmp - \optPop}_\TPop}^2
         &                                                                                                                                                                               & \text{(Triangular inequality)}                         \\
         & \geq \frac{1}{2} \p{\norm{\mu_n e_n}_\TPop - \norm{\sum_{i<n} \mu_i e_i}_\TPop - \norm{\optEmp - \optPop}_\TPop}^2
         &                                                                                                                                                                               & \text{(Idem)}                                          \\
         & = \frac{1}{2} \norm{\mu_n e_n}_\TPop^2 \p{1 - \frac{\norm{\sum_{i<n} \mu_i e_i}_\TPop}{\norm{\mu_n e_n}_\TPop} - \sqrt{\frac{2 \LPop(\optEmp)}{\norm{\mu_n e_n}_\TPop^2}}}^2                                                           \\
         & \geq \frac{1}{2} \norm{\mu_n e_n}_\TPop^2 \p{1 - 2\frac{\norm{\sum_{i<n} \mu_i e_i}_\TPop}{\norm{\mu_n e_n}_\TPop} - \sqrt{\frac{8\LPop(\optEmp)}{\norm{\mu_n e_n}_\TPop^2}}}
         &                                                                                                                                                                               & \text{(As $(1-x)^2 \geq 1 - 2x$)}                      \\
         & \geq \frac{1}{2} \varsigma_n \mu_n^2 \p{1 - 2 \sqrt{\kappa_{\TPop}}\frac{\absv{\sum_{i<n} \mu_i}}{\absv{\mu_n}} - \sqrt{\frac{8\LPop(\optEmp)}{\varsigma_n \mu_n^2}}}
         &                                                                                                                                                                               & \text{(Def. of $\kappa_\TPop$, with $\norm{e_i} = 1$)} \\
         & \geq \frac{1}{2} \varsigma_n \mu_n^2 \p{1 - 2 \sqrt{\kappa_{\TPop}} \p{\frac{n \sum_{i<n} \mu_i^2}{\mu_n^2}}^{1/2} - \sqrt{\frac{8\LPop(\optEmp)}{\varsigma_n \mu_n^2}}}
         &                                                                                                                                                                               & \text{(Cauchy-Schwartz)}                               \\
         & \geq \frac{1}{2} \varsigma_n \mu_n^2 \p{1 - 2 \sqrt{\kappa_{\TPop} n} \epsilon_s - \sqrt{\frac{8\LPop(\optEmp)}{\varsigma_n \mu_n^2}}}
         &                                                                                                                                                                               & \text{(Def. of $\epsilon_s$)}                          \\
    \end{align*}
    We then use Lemma~\ref{lem:technical_bound_small_lr}. Firstly, we can use $\epsilon_s^2 \leq 1/(16n\kappa_{\TPop})$ so that $2 \sqrt{\kappa_{\TPop} n} \epsilon_s \leq 1/4$. This gives
    \begin{equation*}
        \LPop(\theta_s)
        \geq \frac{3}{8} \varsigma_n \mu_n^2 \p{1 - \frac{4}{3}\sqrt{\frac{8\LPop(\optEmp)}{\varsigma_n \mu_n^2}}}.
    \end{equation*}
    Secondly, we have $\sigma_n \mu_n^2/2 \geq 2\alpha/5$. This give ultimately
    \begin{equation*}
        \LPop(\theta_s)
        \geq \frac{3}{10} \alpha \frac{\varsigma_n}{\sigma_n} \p{1 - \frac{4}{3}\sqrt{\frac{8\LPop(\optEmp)}{\frac{4}{5} \alpha \frac{\varsigma_n}{\sigma_n}}}}
        = \frac{3}{10} \alpha \frac{\varsigma_n}{\sigma_n} \p{1 - \sqrt{\frac{160}{9} \frac{\LPop(\optEmp)}{\alpha \frac{\varsigma_n}{\sigma_n}}}}.
    \end{equation*}
    Simplifying this expression gives
    \begin{equation}\label{eq:thm_appendix_final_small}
        \LPop(\theta_s)
        \geq \frac{3}{10} \alpha \frac{\varsigma_n}{\sigma_n} \p{1 - \sqrt{18 \frac{\LPop(\optEmp)}{\alpha \frac{\varsigma_n}{\sigma_n}}}}.
    \end{equation}

    \paragraph{Combining the two bounds.}
    We now simply combine the upper bound of \cref{eq:thm_appendix_final_big} and the lower bound of \cref{eq:thm_appendix_final_big}. We get
    \begin{equation}
        \frac{\LPop(\theta_s)}{\LPop(\theta_b)} \geq
        \frac{3}{50}
        \frac{\kappa_\TEmp}{\kappa_{\TPop}}
        \csb{
            \frac{1 - \sqrt{18 \frac{\sigma_n}{\varsigma_n} \frac{\LPop(\optEmp)}{\alpha}}}
            {1 + 2\frac{\sigma_1}{\varsigma_1} \frac{\LPop(\optEmp)}{\alpha}}
        }.
    \end{equation}
    We may prefer the other form, introducing the positive part $(x)_+ = \max(0, x)$ and using $50/3 < 17$:
    \begin{equation}
        \LPop(\theta_b) \leq 17 \frac{\kappa_{\TPop}}{\kappa_\TEmp} \csb{\frac{1 + 2\frac{\sigma_1}{\varsigma_1} \frac{\LPop(\optEmp)}{\alpha}}{\p{1 - \sqrt{18 \frac{\sigma_n}{\varsigma_n} \frac{\LPop(\optEmp)}{\alpha}}}_+}} \LPop(\theta_s)
        \eqdef 17 \frac{\kappa_{\TPop}}{\kappa_\TEmp} c_\alpha \LPop(\theta_s).
    \end{equation}
    Finally, with Assumption~\ref{asmpt:target} we have
    \begin{align*}
        1 + 2\frac{\sigma_1}{\varsigma_1} \frac{\LPop(\optEmp)}{\alpha} \leq \frac{3}{2}, \\
        1 - \sqrt{18 \frac{\sigma_n}{\varsigma_n} \frac{\LPop(\optEmp)}{\alpha}} \geq \frac{1}{2},
    \end{align*}
    so that $c_\alpha \leq 2$.
\end{proof}






\section{Low-noise classification tasks}
\label{sec:appendix_low_noise_loucas_results}

Taking big step size is particularly critical in classification tasks. In this section, we build on the result of \cite{pmlr-v75-pillaud-vivien18a} to relate classification performances with Hilbert norm. Recall notably the notations of \cref{sec:application_kernel_low_noise}.

\paragraph{Assumptions.}
The following assumption comes from (A1) in \cite{pmlr-v75-pillaud-vivien18a}. It is well characterized in usual image classification settings.

\begin{assumption}[Strong margin condition.]\label{asmpt:strong_margin_condition_a1}
    We have $\regFunc(x) \geq \delta$ for some $\delta \in \p{0, 1}$.
\end{assumption}

The second assumption characterizes the statistical optimality of $\optPop$. It does not assume the regression function to belong to $\HH$, but ensures some proximity in $\LL_\infty$ norm. It is close to (A4) in \cite{pmlr-v75-pillaud-vivien18a}.

\begin{assumption}[Statistical optimality of the population loss' optimum.]\label{asmpt:statistical_optimality_empirical_loss_a4}
    We have that
    \begin{equation}
        \sign \p{\regFunc(x)} \optPop(x) \geq \delta/2.
    \end{equation}
\end{assumption}
This assumption is satisfied as soon as the regression function can be approximated by a function of the RKHS with precision $\delta/2$ in $\LL_\infty$ norm. For instance, a sufficient condition is the regression function  $\regFunc(x)$ to belong to $\HH$. Then $\regFunc = \optPop$ and the assumption is satisfied. Note that this always imply that $\optEmp$ reaches $0$ test error for sufficienly many samples, which is the key hindsight of \cite{pmlr-v75-pillaud-vivien18a}. Indeed, for a proper choice of regularization $\lambda$ one has that
\begin{equation*}
    \norm{\optEmp - \optPop}_\HH \lesssim n^{-\frac{b r}{1 + b(2 r + 1)}},
\end{equation*}
where $(r, b)$ are the parameters of the source and capacity condition, both of which characterizes the difficulty of the learning task, see \cite{blanchard}. This implies that for sufficiently many training samples $n$, $\optEmp$ will be close to $\optPop$ in Hilbert norm, which implies proximity in $\LL_\infty$ (pointwise) norm.

\paragraph{Hilbert norm proximity implies statistical optimality}

The following lemma is very close to Lemma 1 in \cite{pmlr-v75-pillaud-vivien18a}, and is a direct consquence of our assumption. We first introduce
\begin{equation}\label{eq:def_Omega_space_pos_neg}
    \Omega_+ = \cb{x; \regFunc(x) \geq \delta}, ~~ \Omega_- = \cb{x; \regFunc(x) \leq -\delta}.
\end{equation}
Next lemma basically relies on the decomposition
\begin{equation*}
    \norm{\theta - \regFunc}_{\LL_\infty} \leq \norm{\theta - \optPop}_{\LL_\infty} + \norm{\optPop - \regFunc}_{\LL_\infty}.
\end{equation*}

\begin{lemma}[Small Hilbert norm implies statistical optimality]\label{lem:small_hilbert_norm_statistically_optimal}
    Consider an estimator $\theta$ which satisfies
    \begin{equation*}
        \norm{\theta - \optPop}_\HH \leq \frac{\delta}{2 C_K}.
    \end{equation*}
    Then, this estimator is statistically optimal, in the sense that it has 0 excess error:
    \begin{equation*}
        B(\theta) - \inf_{\theta \in \HH} B(\theta) = 0.
    \end{equation*}
\end{lemma}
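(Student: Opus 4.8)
The plan is to bound the $\LL_\infty$ distance between $\theta$ and the regression function $\regFunc$, and then show that a small enough such distance forces the sign of $\theta$ to agree with the sign of $\regFunc$ on the regions $\Omega_+$ and $\Omega_-$, which by the strong margin condition (\cref{asmpt:strong_margin_condition_a1}) cover $\XX$ up to a $\rho_\XX$-null set. Concretely, first I would use the reproducing property together with the boundedness of the kernel: for any $x$, $\absv{\theta(x) - \optPop(x)} = \absv{\dotprod{\phi(x)}{\theta - \optPop}} \leq \norm{\phi(x)}_\HH \norm{\theta - \optPop}_\HH \leq C_K \norm{\theta - \optPop}_\HH$, using $\norm{\phi(x)}_\HH^2 = \dotprod{\phi(x)}{\phi(x)} = k(x,x) \leq C_K^2$ (the constant $C_K$ here playing the role of $\sqrt{C_K}$ in the earlier notation — I would match whichever normalization the paper fixed). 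Hence $\norm{\theta - \optPop}_{\LL_\infty} \leq C_K \norm{\theta - \optPop}_\HH \leq \delta/2$ under the hypothesis.

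Next I would combine this with \cref{asmpt:statistical_optimality_empirical_loss_a4}, which gives $\sign(\regFunc(x)) \optPop(x) \geq \delta/2$ for all $x$ (in $\Supp \rho_\XX$). Take $x \in \Omega_+$, so $\regFunc(x) \geq \delta > 0$ and thus $\sign(\regFunc(x)) = 1$; then $\optPop(x) \geq \delta/2$, and $\theta(x) \geq \optPop(x) - \norm{\theta - \optPop}_{\LL_\infty} \geq \delta/2 - \delta/2 = 0$. To get a strict inequality one can instead assume the hypothesis with a strict inequality, or note that the boundary set $\{\regFunc = 0\}$ has $\rho_\XX$-measure zero by \cref{asmpt:strong_margin_condition_a1} (which in fact says $\Omega_+ \cup \Omega_-$ is all of $\Supp\rho_\XX$), so it suffices to have $\sign(\theta(x)) = \sign(\regFunc(x))$ $\rho_\XX$-almost everywhere. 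Symmetrically, for $x \in \Omega_-$ one gets $\theta(x) \leq 0$. Therefore $\sign(\theta(x))$ agrees with $\sign(\regFunc(x))$ on $\Omega_+ \cup \Omega_-$, which is $\XX$ up to a null set.

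Finally I would translate this pointwise sign agreement into the statement about the binary error. The Bayes classifier for $B$ is $\sign(\regFunc(x)) = \sign(\EE[y|x])$, so $\inf_{\theta \in \HH} B(\theta) \geq \inf_{f} B(f) = B(\sign \circ \regFunc)$, and since $\sign(\theta(x)) = \sign(\regFunc(x))$ $\rho_\XX$-a.e., we get $B(\theta) = \mathbb{P}[\sign(\theta(x)) \neq y] = \mathbb{P}[\sign(\regFunc(x)) \neq y] = \inf_f B(f) \leq \inf_{\theta \in \HH} B(\theta)$, whence $B(\theta) - \inf_{\theta \in \HH} B(\theta) \leq 0$; the reverse inequality is trivial, so the excess error is exactly $0$.

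The only delicate point is the handling of the boundary — whether to state the hypothesis with a strict inequality, or to invoke that under \cref{asmpt:strong_margin_condition_a1} the set $\{x : \absv{\regFunc(x)} < \delta\}$ is $\rho_\XX$-negligible (indeed empty on the support) so that the weak inequalities $\theta(x) \geq 0$ on $\Omega_+$ and $\theta(x) \leq 0$ on $\Omega_-$ already pin down the sign $\rho_\XX$-a.e. Everything else is a one-line application of Cauchy–Schwarz in $\HH$ and the triangle inequality in $\LL_\infty$.
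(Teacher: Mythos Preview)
Your proposal is correct and follows essentially the same approach as the paper: bound $\norm{\theta-\optPop}_{\LL_\infty}$ by $C_K\norm{\theta-\optPop}_\HH\le\delta/2$ via the reproducing property, then combine with the margin assumptions to force sign agreement on $\Omega_+\cup\Omega_-$. The only cosmetic difference is that the paper routes the pointwise bound through $\regFunc$ (writing $\theta(x)\ge \regFunc(x)-\norm{\theta-\optPop}_{\LL_\infty}-\norm{\optPop-\regFunc}_{\LL_\infty}$), whereas you invoke \cref{asmpt:statistical_optimality_empirical_loss_a4} directly to get $\optPop(x)\ge\delta/2$ on $\Omega_+$ and then $\theta(x)\ge\optPop(x)-\delta/2\ge 0$; your version is arguably cleaner and matches the stated assumption more literally.
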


\begin{proof}
    First of all, we leverage the fact that the Hilbert norm upper bounds the $L_\infty$ norm, with
    \begin{equation*}
        \norm{\theta - \optPop}_{L_\infty} \leq C_K \norm{\theta - \optPop}_\HH \leq \frac{\delta}{2}.
    \end{equation*}
    Then, on $\Omega_+$, whose definition is given in \cref{eq:def_Omega_space_pos_neg}, we have that
    \begin{equation*}
        \forall x \in \Omega_+, \theta_{x} > \regFunc(x) - \norm{\theta - \optPop}_{L_\infty} - \norm{\optPop - \regFunc}_{L_\infty} \geq \delta - \frac{\delta}{2} - \frac{\delta}{2} = 0,
    \end{equation*}
    so $\theta$ will have accurate prediction for all positive labels. The same goes for negative labels. Thus, $\theta$ has 0 test error.
\end{proof}
Thus, we see that the \emph{Hilbert norm is a good proxy for minimizing the test error $B$}.

\section{Regression tasks and comparison with spectral filters}
\label{sec:appendix_comparison_spectral_filters}

If the downstream task is regression, then we can still apply our result by introducing the \emph{population covariance operator},
\begin{equation}\label{eq:def_population_covariance}
    \TPop = \int \phi(x) \otimes \phi(x) \dd \rho_x(x).
\end{equation}
Then, Theorem~\ref{thm:main_body} holds by considering (recall the definition of the population loss $\calP$ in \cref{eq:def_pop_loss})
\begin{equation*}
    \LPop(\theta) = \frac{1}{2} \norm{\theta - \optPop}_{\TPop}^2 = \mathcal{P}(\theta) - \inf_{\nu \in \HH} \mathcal{P}(\nu),
\end{equation*}
which is nothing but the \emph{excess risk} of the estimator. Then, under the assumptions of Theorem~\ref{thm:main_body} we have that
\begin{equation}\label{eq:bound_regression}
    \LPop(\theta_b) ~\leq~ 34 \frac{\kappa_{\TPopProj}}{\kappa_\TEmp} \, \LPop(\theta_s).
\end{equation}

Gradient descent for kernel ridge regression has been widely studied in the past, to say the least. \Cref{eq:bound_regression} appears to be in contradiction with most of them. In this section, we emphasize the limit of our assumptions to point out that there is no conflict with existing theory.

\paragraph{Early stage of training.}
The bound in \cref{eq:bound_regression} ensures better generalization when taking big step size, if the r.h.s is bigger than 1. However, the pioneering work of \cite{yao2007early} established that the learning rate had no influence in the generalization capabilities of the estimator. A key difference though is that the results of Theorem~\ref{thm:main_body} only holds in the early stage of training, when the optimization error $\alpha$ is big w.r.t to the statistical error $\LPop(\optEmp)$ : otherwise, Assumption~\ref{asmpt:target} is not satisfied. In contrast, results of the like of \cite{yao2007early} holds for sufficiently many samples $n$, and require a number of steps $t$ bounded by below by a power of $n$ -- they require an upper-bound on $\alpha$, while we require a lower-bound in Assumption~\ref{asmpt:target}.

\paragraph{Is \cref{eq:bound_regression} informative?}
As mentioned above, \cref{eq:bound_regression} ensures better generalization of big step size only if the r.h.s is bigger than $1$. However, in the particular scenario of kernel regression, the empirical covariance $\TEmp$ is the \emph{discretization} of the population covariance $\TPop$. Thus, numerous results bound the discrepancy between the two, notably when the capacity condition holds, see \textit{e.g.} Proposition 5.3 to 5.5 in \cite{blanchard}. In these settings, we can expect the ratio $\kappa_\TEmp/\kappa_\TPop$ to go to $1$ for large number of samples $n$. Thus, we cannot conclude in better excess risk of $\theta_b$ compared to $\theta_s$.

\paragraph{Comparison with spectral filters.}
Spectral filters are an elegant way to describe a wide family of regularization for kernel regression \cite{pub.1045202542,bauer2007regularization}. In a nutshell, it relies on studying the class of estimator characterized by a filter function $g_\lambda$, where $\lambda$ is a regularization parameter, equal to $1/t$ in the case of early stopping in GD. GD with moderate step sizes is a spectral filter; but GD with large step size is not. We now explain this difference, which helps to build an intuition on our result.

Consider the estimator $\theta = g_\lambda(\TEmp) S^* y$, where $S$ is the so-called sampling operator defined in \cref{sec:appendix_operators}. The unregularized solution is obtained with $\lambda=0$, for which we must have $g_{\lambda=0}(\sigma) = \sigma^{-1}$. We denote it with $\optEmp = \TEmp^{-1}S^*y$, and we can see how does $\theta$ approaches the unregularized optimum. We have
\begin{align*}
    \dotprod{\theta - \optEmp}{e_i}
     & = \dotprod{g_\lambda(\TEmp)S^* y - \TEmp^{-1}S^* y}{e_i}             \\
     & = \dotprod{\p{g_\lambda(\TEmp)\TEmp^{-1} - \II}\TEmp^{-1}S^* y}{e_i} \\
     & = \dotprod{\p{g_\lambda(\TEmp)\TEmp - \II}\optEmp}{e_i}              \\
     & = (g_\lambda(\sigma_i) \sigma_i - 1) \dotprod{\optEmp}{e_i}.
\end{align*}
If we start from $\theta_0 \neq 0$, this relation turns to $\absv{\dotprod{\theta - \optEmp}{e_i} } = \absv{1 - g_\lambda(\sigma_i) \sigma_i} \absv{\dotprod{\theta_0 - \optEmp}{e_i}}$ in the case of GD.
We denote the residual with $r_\lambda(\sigma) = \absv{1 - \sigma g_\lambda(\sigma)}$. We then compare $r_\lambda$ for various spectral filters in \cref{fig:comparison_spectral_filters}. Note that for gradient descent, we have $r_{1/t}(\sigma) = \absv{1 - \eta \sigma}^t$ and we recover the expression we obtained from \cref{eq:gradient_descent_update}. The key hindsight is that spectral filters will learn, \textit{i.e} \emph{optimize}, the \emph{biggest eigendirection} first. For instance, truncated regression uses as estimator the first eigencomponents of the unregularized estimator, leaving the smaller eigencomponents untouched. This is at odds with what we aim at with big learning rate. There is no contradictions though, as we want in the end to minimize the excess risk $\LPop$ -- a quadratic with operator $\TPop$ -- and we assumed the empirical covariance $\TEmp$ to be a discretization of $\TPop$. Thus, in this settings $\LEmp$ is a good proxy for $\LPop$ and minimizing the biggest eigendirection first will make the excess risk $\LPop$ decrease faster. This corresponds to having level sets well aligned in \cref{fig:estimators_path}.

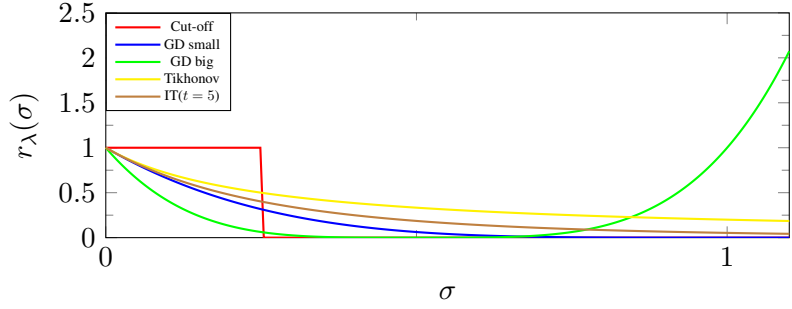
\begin{figure}[h]
    \centering
    \begin{tikzpicture}

        \begin{axis}[
                xmin = 0, xmax = 1.1,
                ymin = 0, ymax = 2.5,
                ytick distance = 0.5,
                minor tick num = 1,
                width = .7\textwidth,
                height = 0.3\textwidth,
                xlabel = {$\sigma$},
                ylabel = {$r_\lambda(\sigma)$},
                legend style={
                        at={(0,1)},
                        anchor=north west,
                        nodes={scale=0.5, transform shape},
                    },
                xtick={0, 1},
                xticklabels={$0$, $1$},
            ]

            \addplot[
                domain = 0:1.1,
                samples = 200,
                thick,
                red
            ] {(x<1/4)};

            \addplot[
                domain = 0:1.1,
                samples = 200,
                smooth,
                thick,
                blue
            ] {(1-x)^4};

            \addplot[
                domain = 0:1.1,
                samples = 200,
                smooth,
                thick,
                green
            ] {(1-2*x)^4};

            \addplot[
                domain = 0:1.1,
                samples = 200,
                smooth,
                thick,
                yellow
            ] {1- x/(x+1/4)};

            \addplot[
                domain = 0:1.1,
                samples = 200,
                smooth,
                thick,
                brown
            ] {(1/(1+x*(4/5)))^5};

            \legend{
                Cut-off,
                GD small,
                GD big,
                Tikhonov,
                IT($t=5$)
            }

        \end{axis}
    \end{tikzpicture}
    \caption{Residual of various spectral filters, with regularization $\lambda=1/4$ or $t=4$ for GD. The best filter is spectral cut-off \emph{(red)}. The resulting estimator is purely directed along the smallest eigenvectors of $\TEmp$. Gradient descent with small step sizes \emph{(blue)} and (iterated) Tikhonov \emph{(yellow, brown)} mimick this filter. On the other hand, gradient descent with big step sizes \emph{(green)} is not an admissible filter in the sense of \cite{pub.1045202542}, as it attenuates less  the biggest component.}
    \label{fig:comparison_spectral_filters}
\end{figure}

\paragraph{Theorem~\ref{thm:main_body} in practice.}
The limits of this subsection -- low optimization regime, low value for $\kappa_\TEmp/\kappa_\TPop$ and difference with spectral filters -- can be mitigated for multiple reasons. First of all, as we discussed earlier kernel regression can simply be a mean in order to solve a \emph{classification task}, in which case the statistical results of spectral filters are no longer relevant. Secondly, there can be big discrepancies between the train and test set in practice. Indeed, the risk $\LPop$ with which estimators are compared is often a separate test set, with fixed condition number $\kappa_\TPop$. Additionally, data augmentation can be used on the train set, which then introduces spurrious directions in the empirical covariance matrix $\TEmp$. Thus, even though spectral filters are optimal in theoretical settings, taking big step size can prove useful in practical scenari, which are covered by our settings with quadratic forms of $\HH$.

\section{Gradient descent updates in practice}
\label{sec:appendix_gradient_descent_in_practice}

\subsection{Useful operators}
\label{sec:appendix_operators}
We assume there are $n$ training samples. If considered, the test loss consists of $m$ samples. 

We denote $\SEmp, \SEmpD$ the \emph{sampling} operator and its dual, which are defined as
\begin{equation}
    \begin{aligned}
        \SEmp: \HH \to \RR^n, && \forall f \in \HH, ~~ &\SEmp(f) = \frac{1}{\sqrt{n}}\p{\begin{matrix}
            \dotprod{f}{\phi(x_1)}_{\HH} \\ \vdots \\ \dotprod{f}{\phi(x_n)}_{\HH}
        \end{matrix}} \\
        \SEmpD: \RR^n \to \HH, && \forall \alpha \in \RR^n, ~~ &\SEmpD(\alpha) = \frac{1}{\sqrt{n}} \sum_{i=1}^n \alpha_i \phi(x_i),
    \end{aligned}
\end{equation}
so that the covariance operator $\TEmp = \SEmpD \SEmp$ and the kernel matrix $K$ write 
\begin{equation}
    \begin{aligned}
        \TEmp: \HH \to \HH, && \TEmp &= \SEmpD \SEmp \\
        K/n: \RR^n \to \RR^n, && \frac{K}{n} &= \SEmp \SEmpD.
    \end{aligned}
\end{equation}
The population version are $\SPop, \SPopD, \TPop$. There are written with an expectation, or with the test dataset as a proxy. 

Note that we have $\TEmp^{-1} \SEmpD = \SEmpD (K/n)^{-1}$. We denote $\sigma_i, e_i$ the spectrum of $\TEmp$ and $u_i$ the eigenvectors of $K/n$ (not $K$), with the same spectrum. The eigenvectors in $\HH$ and $\RR^n$ are related with
\begin{equation*}
    \forall i \in \cb{1, \dots, n}, ~~ u_i = \frac{1}{\sqrt{\sigma_i}} \SEmp e_i, ~~ e_i = \frac{1}{\sqrt{\sigma_i}} \SEmpD u_i.
\end{equation*}

Finally, an estimator $\theta \in \HH$ can be represented with a vector $\alpha \in \RR^n$. Specifically, we have the relation 
\begin{equation}\label{eq:relation_alpha_theta}
    \theta = \sqrt{n} \SEmpD \alpha \iff \sqrt{n} \SEmp \theta = K \alpha \iff \alpha = \sqrt{n} K^{-1} \SEmp \theta.
\end{equation}

\subsection{Gradient descent on the Hilbert norm is possible}
\paragraph{Different spectrum between $\HH$ and $\RR^n$.}
We denote the training loss with $\LEmp$ and the Hilbert norm with $\LHilbert{}$. Given the relation of \cref{eq:relation_alpha_theta}, we have that
\begin{equation}
    \begin{aligned}
        \LEmp\p{\theta}   & = \frac{1}{2} \norm{\theta - \optEmp}_{\TEmp}^2 = \frac{1}{2n} \norm{K(\alpha - \alphaOpt)}_{\RR^n}^2, \\
        \LHilbert{\theta} & = \frac{1}{2} \norm{\theta - \optEmp}_{\HH}^2 = \frac{1}{2} \norm{\alpha- \alphaOpt}_{K}^2,
    \end{aligned}
\end{equation}
where we overloaded $\LEmp$ to be a function of $\HH_n$ \textit{and} $\RR^n$. Specifically, we used $\LEmp\p{\alpha} = \LEmp \circ \sqrt{n} \SEmpD \p{\alpha}$.
Recall that $K/n$ and $\TEmp$ share the same spectrum. Thus, $\LEmp$ is a quadratic whose spectrum is $\p{\sigma_1, \dots, \sigma_n}$ w.r.t the variable $\theta$, but spectrum $\p{n\sigma_1^2, \dots, n\sigma_n^2}$ w.r.t the variable $\alpha$. Likewise, $\LHilbert{}$ is a quadratic with spectrum $\p{1, \dots, 1}$ w.r.t the variable $\theta$, but a spectrum $\p{n\sigma_1, \dots, n\sigma_n}$ w.r.t the variable $\alpha$.

\paragraph{Do we care about this difference?}
The global picture behind what follows is that $\alpha$ is isomorphic to $\TEmp^{-1/2} \HH$. If we expressed the estimator $\theta$ as a combination of eigenbasis vector, that is $\theta = \sum_i \beta_i e_i$, then $\beta$ is isormorphic to $\HH$ and the distinction does not hold. The fact that the estimator writes as a combination of $\phi(x_i)$ with $\alpha$ adds another level of geometric distortion.

\paragraph{Gradient descent in $\HH$.}
In the Hilbert space $\HH$, the updates are easy:
\begin{equation*}
    \begin{aligned}
        \theta_{t+1} & = \theta_{t} - \eta \TEmp (\theta_{t} - \optEmp) \iff \theta_{t} - \optEmp & = \p{\II - \eta \TEmp}^t(\theta_{0} - \optEmp), &  & \text{(GD on $\LEmp$)}        \\
        \theta_{t+1} & = \theta_{t} - \eta (\theta_{t} - \optEmp) \iff \theta_{t} - \optEmp       & = \p{1-\eta}^t(\theta_{0} - \optEmp),           &  & \text{(GD on $\LHilbert{}$)}.
    \end{aligned}
\end{equation*}
The big learning rate range is then $\eta_s < 2/(\sigma_1 + \sigma_n) < \eta_b < (2/\sigma_1)$.

\paragraph{Gradient descent in $\RR^n$.}
In practice, we do not have access to $\alphaOpt$, or only though it's evaluation with $K$. Yet, we are still able to minimize these quadratic form through the gradient. \textit{E.g.} when $\alphaOpt$ is defined through $K \alphaOpt = y$ in the unregularized settings, or $(K + n\lambda) \alphaOpt = y$ in the Tikhonov-regularized case. The gradient descent updates on the train loss read:
\begin{equation}\label{eq:in_practice_gradient_updates_train_loss}
    \begin{aligned}
        \alpha_{t+1}
         & = \alpha_t - \eta \frac{K^2}{n} (\alpha_t - \alphaOpt)
        = \alpha_t - \eta \frac{K}{n} (K \alpha_t - y)                                                               \\
         & \iff \alpha_t - \alphaOpt = \p{\II - \eta\frac{K^2}{n}}^t (\alpha_0 - \alphaOpt)
         &                                                                                  & \text{(GD on $\LEmp$)}
    \end{aligned}
\end{equation}
Here, the range of learning rate is $\eta_s < 2 / \csb{n (\sigma_1^2 + \sigma_n^2)} < \eta_b < 2/\csb{n\sigma_1^2}$.
Interestingly, we can still do gradient descent on the Hilbert norm in closed form!
\begin{equation}\label{eq:in_practice_gradient_updates_hilbert}
    \begin{aligned}
        \alpha_{t+1}
         & = \alpha_t - \eta K (\alpha_t - \alphaOpt)
        = \alpha_t - \eta (K \alpha_t - y)                                                                      \\
         & \iff \alpha_t - \alphaOpt = \p{\II - \eta K}^t (\alpha_0 - \alphaOpt)
         &                                                                       & \text{(GD on $\LHilbert{}$)}
    \end{aligned}
\end{equation}
Here, the optimal learning rate is $1/[n\sigma_1]$. Interestingly, choosing a big learning rate in the range $1/\csb{n (\sigma_1 + \sigma_n)} < \eta_b < 2/\csb{n\sigma_1}$ results in an estimator which is closed in \emph{euclidean} norm ( $\RR^n$) to $\alphaOpt$. Note that even though we can evaluate the gradient of $\LHilbert{}$, we \emph{cannot} evaluate its value. Indeed, the objective function would read
\begin{equation*}
    \frac{1}{2} \norm{\alpha - \alphaOpt}^2 = \frac{1}{2} \norm{\alpha - K^{-1}y}^2
\end{equation*}
which is not accessible without inverting the (regularized) kernel matrix.

\section{Additional details on the experiment}
\label{sec:appendix_experiments}

\paragraph{Setting the learning rate.}
We give additional details on the plot ``test accuracy function of train loss $\alpha$'' in \cref{fig:xps}.
The plot is averaged over 10 initialization for $\theta_0$. We used $\eta_s = 1/\sigma_1$ and $\eta_b = \tau \cdot 2/\sigma_1$, with $\tau = 1 - 10^{-5}$. We elaborate on these choices:
\begin{itemize}
    \item The optimal learning rate for upper bounding for $\sigma_1$-smooth, $\sigma_n$-strongly convex function is $\eta_\text{opt} = 2 / (\sigma_1 + \sigma_n)$, as explained in the discussion of Assumption~\ref{asmpt:lr}. However, this requires a massive amount of steps to converge. This is due to the terms depending on the initialization in the lower bound for $t_b, t_s$ in \cref{eq:lower_bound_t_big,eq:lower_bound_t_small}. Thus, we set $\eta_s = 1/\sigma_1$, which is the optimal rate for $\sigma_1$-smooth function, and we do observe fast convergence with this choice.
    \item Instead of choosing $\eta_b \in \csb{2/(\sigma_1 + \sigma_n), 2/\sigma_1}$, we use $\eta_b = \tau \cdot 2/\sigma_1$, with $\tau$ chosen with the experiment on the test accuracy (\cref{fig:xps}, left). Indeed, setting $\tau = 1$ can result in situation where there can't be convergence; and choosing $\eta_b > \eta_\text{opt}$, as we describe in the theory, results in very slow convergence.
\end{itemize}
This discrepancy between theory and practice is due to our proof which is very conservative in the error bound. A more refined analysis would rely on $\theta_b - \optEmp$ (resp. $\theta_s - \optEmp$) belonging to the span of the $k$-th first (resp. last) eigenvectors. Besides, in practical settings the learning rate is an hyperparameter to tune, which is exactly the approach we used to produce \cref{fig:xps}.
\end{document}